\newcommand{\beqa}{\begin{eqnarray}}
\newcommand{\eeqa}{\end{eqnarray}}
\newcommand{\beq}{\begin{equation}}
\newcommand{\eeq}{\end{equation}}
\newcommand{\ben}{\begin{enumerate}}
\newcommand{\een}{\end{enumerate}}
\newcommand{\bit}{\begin{itemize}}
\newcommand{\eit}{\end{itemize}}
\newcommand{\bi}{\begin{itemize} \item}
\newcommand{\ei}{\end{itemize}}
\newcommand{\begindef}{\begin{Definition} \rm}
\newcommand{\beginexa}{\begin{Example} \rm}
\newcommand{\beginthe}{\begin{Theorem} \rm}
\newcommand{\beginpro}{\begin{Proposition} \rm}
\newcommand{\beginlem}{\begin{Lemma} \rm}
\newcommand{\begincon}{\begin{Conjecture} \rm}
\newcommand{\begincor}{\begin{Corollary} \rm}
\newcommand{\eat}[1]{}
\def\papernumber #1 raised #2 {
\vspace{-#2}
\vbox to 0pt{\hfill\framebox{\bf Paper Number #1}}
\vspace{#2}
}
\def\T{{\scriptscriptstyle\mathsf{T}}}
\def\S{\mathbf{S}}
\def\G{\mathcal{G}}
\def\sn{\mathbf{S}_n}
\def\se{\mathbf{S}_e}
\def\bignegspace{\!\!\!}
\def\algname{\textsc{JOENA}}
\newtheorem{definition}{Definition}
\newtheorem{proposition}{Proposition}
\newtheorem{theorem}{Theorem}
\newtheorem*{theorem*}{Theorem}
\newtheorem*{proposition*}{Proposition}
\begin{document}

\title{Joint Optimal Transport and Embedding for Network Alignment}


\author{Qi Yu}
\email{qiyu6@illinois.edu}
\affiliation{
  \institution{University of Illinois Urbana-Champaign}
  \state{IL}
  \country{USA}
}
\authornote{Both authors contributed equally to this research.}

\author{Zhichen Zeng}
\email{zhichenz@illinois.edu}
\affiliation{%
  \institution{University of Illinois Urbana-Champaign}
  \state{IL}
  \country{USA}
}
\authornotemark[1]

\author{Yuchen Yan}
\email{yucheny5@illinois.edu}
\affiliation{%
  \institution{University of Illinois Urbana-Champaign}
  \state{IL}
  \country{USA}
}

\author{Lei Ying}
\email{leiying@umich.edu}
\affiliation{%
  \institution{University of Michigan, Ann Arbor}
  \department{EECS}
  \state{MI}
  \country{USA}
}

\author{R. Srikant}
\email{rsrikant@illinois.edu}
\affiliation{%
  \institution{University of Illinois Urbana-Champaign}
  \state{IL}
  \country{USA}
}

\author{Hanghang Tong}
\email{htong@illinois.edu}
\affiliation{%
  \institution{University of Illinois Urbana-Champaign}
  \state{IL}
  \country{USA}
}

\renewcommand{\shortauthors}{Qi Yu et al.}

\begin{abstract}
    Network alignment, which aims to find node correspondence across different networks, is the cornerstone of various downstream multi-network and Web mining tasks.
    Most of the embedding-based methods indirectly model cross-network node relationships by contrasting positive and negative node pairs sampled from hand-crafted strategies, which are vulnerable to graph noises and lead to potential misalignment of nodes.
    Another line of work based on the optimal transport (OT) theory directly models cross-network node relationships and generates noise-reduced alignments.
    However, OT methods heavily rely on fixed, pre-defined cost functions that prohibit end-to-end training and are hard to generalize.
    In this paper, we aim to unify the embedding and OT-based methods in a mutually beneficial manner and propose a \underline{j}oint \underline{o}ptimal transport and \underline{e}mbedding framework for \underline{n}etwork \underline{a}lignment named \algname.
    For one thing (\textit{OT for embedding}), through a simple yet effective transformation, the noise-reduced OT mapping serves as an adaptive sampling strategy directly modeling all cross-network node pairs for robust embedding learning.
    For another (\textit{embedding for OT}), on top of the learned embeddings, the OT cost can be gradually trained in an end-to-end fashion, which further enhances the alignment quality.
    With a unified objective, the mutual benefits of both methods can be achieved by an alternating optimization schema with guaranteed convergence.
    Extensive experiments on real-world networks validate the effectiveness and scalability of \algname, achieving up to 16\% improvement in MRR and 20$\times$ speedup compared with the state-of-the-art alignment methods.
\end{abstract}

\begin{CCSXML}
<ccs2012>
   <concept>
       <concept_id>10010147.10010257</concept_id>
       <concept_desc>Computing methodologies~Machine learning</concept_desc>
       <concept_significance>300</concept_significance>
       </concept>
   <concept>
       <concept_id>10002951.10003227.10003351</concept_id>
       <concept_desc>Information systems~Data mining</concept_desc>
       <concept_significance>500</concept_significance>
       </concept>
   <concept>
       <concept_id>10002951.10003260</concept_id>
       <concept_desc>Information systems~World Wide Web</concept_desc>
       <concept_significance>500</concept_significance>
       </concept>
 </ccs2012>
\end{CCSXML}

\ccsdesc[300]{Computing methodologies~Machine learning}
\ccsdesc[500]{Information systems~Data mining}
\ccsdesc[500]{Information systems~World Wide Web}



\keywords{Network Alignment, Optimal Transport, Network Embedding}


\maketitle

\vspace{-5pt}
\section{Introduction}\label{sec:intro}

In the era of big data and AI \cite{ban2021ee,ban2023neural,qiu2024ask}, multi-sourced networks\footnote{In this paper, we use the terms `network' and `graph' interchangeably.} appear in a wealth of high-impact applications, ranging from social network analysis~\cite{yao2014dual, cao2017joint,xuslog,yan2024thegcn}, recommender system~\cite{fu2024vcr,liuclass,fu2024parametric} to knowledge graphs~\cite{wang2018acekg, wu2019relation,wang2023noisy,li2024apex}.
Network alignment, the process of identifying node associations across different networks, is the key steppingstone behind many downstream multi-network and Web mining tasks.
For example, by linking users across different social network platforms, we can integrate user actions from multi-sourced sites to achieve more informed and personalized recommendation~\cite{yao2014dual, yan2024pacer, crossmna}. 
Aligning suspects from different transaction networks helps identify financial fraud~\cite{moana,nextalign,du2021new,yan2024topological}.
Entity alignment between incomplete knowledge graphs, such as Wikipedia and WorkNet, helps construct a unified knowledge base~\cite{wu2019relation, chen2016multilingual, yan2021dynamic}.

Many existing methods approach the network alignment problem by learning low-dimensional node embeddings in a unified space across two networks.
Essentially, these methods first adopt different sampling strategies to sample positive and negative node pairs, and then utilize a ranking loss, where positive node pairs (e.g., anchor nodes) are pulled together, while negative node pairs (e.g., sampled dissimilar nodes) are pushed far apart in the embedding space, to model cross-network node relationships~\cite{ione, crossmna, bright, nextalign}.
For example, as shown in Figure~\ref{fig:sampling}, the relationship between anchor node pair $(a_1,a_2)$ is \textit{directly} modeled by minimizing their distance $d(a_1,a_2)$ in the embedding space, while the relationship between $(b_1,b_2)$ is depicted via an \textit{indirect} modeling path $d(b_1,a_1)+d(a_1,a_2)+d(a_2,b_2)$~\cite{bright}. 

Promising as it might be, the indirect modeling adopted by embedding-based methods inevitably bear an approximation error between the path $d(b_1, a_1) + d(a_1, a_2) + d(a_2, b_2)$ and the exact cross-network node relationship $d(b_1,b_2)$, resulting in performance degradation.
Besides, embedding-based methods largely depend on the quality of node pairs sampled by hand-crafted sampling strategies such as random walk-based~\cite{dlna}, degree-based~\cite{crossmna, ione} and similarity-based~\cite{bright, nextalign} strategies. However, such hand-crafted strategies often suffer from high vulnerability to graph noises (e.g., structural and attribute noise), further exacerbating the detrimental effect of indirect modeling.
For example, as shown in Figure~\ref{fig:sampling}, when modeling the relationship between $(b_1,b_2)$ with a missing edge, $(b_1,a_1)$ will be misidentified as an intra-network negative pair by the random walk-based strategy, and the indirect modeling $d(b_1, a_1) + d(a_1, a_2) + d(a_2, b_2)$ will be enlarged as the ranking loss tends to increase $d(b_1,a_1)$, hence failing to align $b_1$ and $b_2$.
Similarly, due to attribute noise on $d_1$, the false negative intra-network node pair $(a_1,d_1)$ sampled by the similarity-based strategy will push the to-be-aligned node pair $(d_1,d_2)$ far apart. Besides, as the amount of indirectly modeled non-anchor node pairs (grey squares in Figure~\ref{fig:sampling}) is significantly greater than directly modeled anchor node pairs (colored squares in Figure~\ref{fig:sampling}), the effect of false negative pairs will be further exacerbated. 

Another line of works utilize the optimal transport (OT) theory for network alignment.
By viewing graphs as distributions over the node set, network alignment is formulated as a distributional matching problem based on a transport cost function measuring cross-network node distances.
Thanks to the marginal constraints in OT~\cite{peyre2019computational}, OT-based method generates noise-reduced alignment with soft one-to-one matching~\cite{parrot}.
However, the effectiveness of most, if not all, of the existing OT-based methods largely depend on pre-defined cost functions, focusing on specific graph structure~\cite{got, walign, fgot, zeng2024hierarchical} or node attributes~\cite{got2,parrot}, leading to poor generalization capability. Though efforts have been made to combine both methods by utilizing the OT objective to supervise embedding learning~\cite{slotalign,gwl,s-gwl,got2,combalign}, we theoretically reveal that directly applying the OT objective for embedding learning cause {\em embedding collapse} where all nodes are mapped to an identical point in the embedding space, hence dramatically degrading the discriminating power.

\begin{figure}[t]
    \centering
    \includegraphics[width=\linewidth]{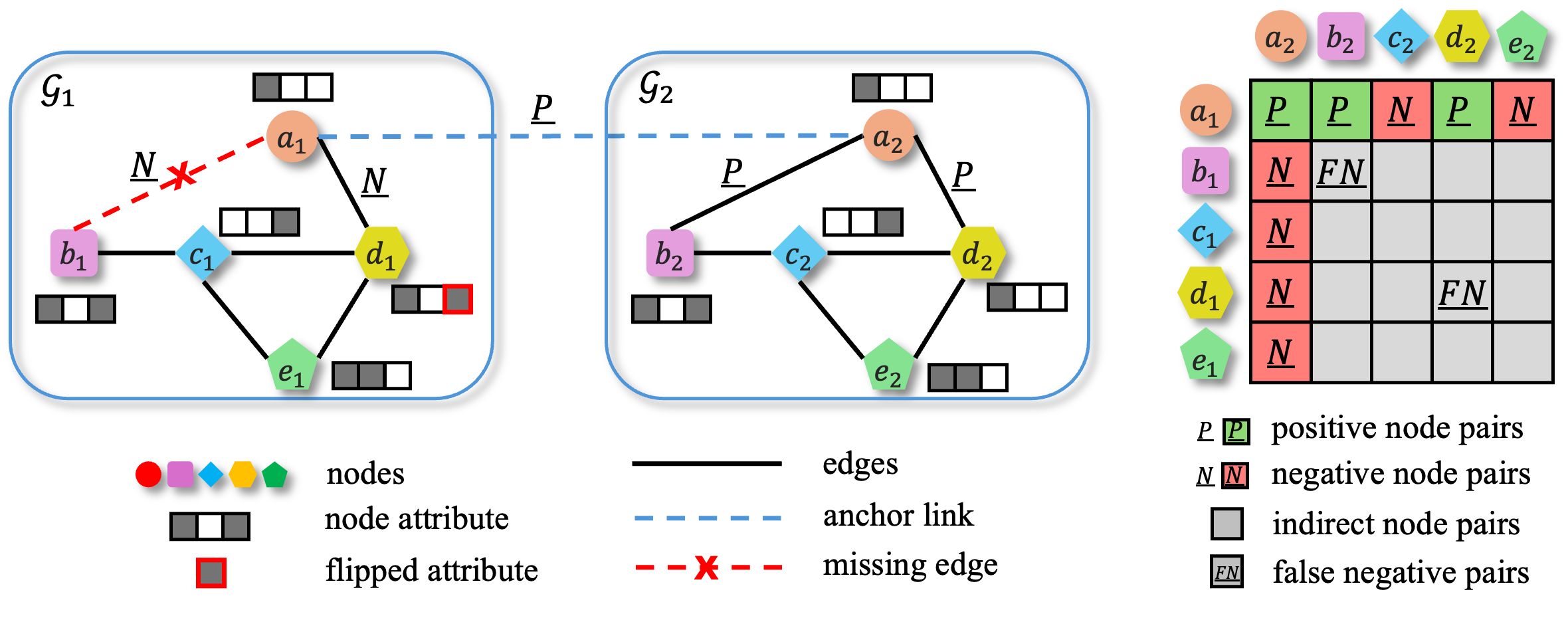}
    \vspace{-15pt}
    \caption{An example of embedding-based methods with hand-crafted sampling strategies.
    Due to edge noise, $(a_1,b_1)$ is identified as a false negative intra-network pair, pushing $(b_1,b_2)$ that should be aligned far apart.
    Likewise, $(d_1,d_2)$ fails to align due to attribute noise on $d_1$. Best viewed in color.}
    \label{fig:sampling}
    \vspace{-10pt}
\end{figure}

In light of the pros and cons of embedding-based and OT-based methods, we seek to explore the complementary roles of two categories of methods to fully realize their mutual benefits. 
Specifically, we first demonstrate their close intrinsic relationships: the OT objective can be neatly transformed into a multi-level ranking loss with a weighted sampling strategy.
Based on this theoretical finding, we propose a novel unified framework named \algname\ to learn node embeddings and alignments jointly in a mutually beneficial way. 
For one thing, to augment embedding learning with OT, the OT mapping is transformed into a cross-network sampling strategy, which not only helps avoid embedding collapse, but also enhances model robustness against graph noises thanks to the direct modeling and noise-reduced property of OT~\cite{tam2019optimal,parrot}.
For another, to augment OT with embedding learning, \algname\ utilizes the learned node embeddings for a better OT cost design, which opens the door for the end-to-end training paradigm and can be adapted to different graphs without extensive parameter tuning.
We have compared the proposed \algname\ with the state-of-the-art network alignment methods on six different datasets, which validates the effectiveness and efficiency of our proposed model.

The main contributions of this paper are summarized as follows:
\begin{itemize}
    \item \textbf{Theoretical Analysis.} To our best knowledge, we are the first to theoretically reveal the close relationship and mutual benefits between OT and embedding-based methods.
    \item \textbf{Novel Model.} We propose a novel framework \algname\ to learn node embeddings and alignments jointly based on a unified objective function.
    \item \textbf{Extensive Experiments.}  Experiments on real-world datasets demonstrate the effectiveness and scalability of \algname, with up to 16\% and 6\% outperformance in MRR on plain and attributed networks, and up to 20$\times$ speed-up in inference time.
\end{itemize}

\vspace{-5pt}
\section{Preliminaries}\label{sec:pre}

\begin{table}[t]
\setlength{\belowcaptionskip}{-.2\baselineskip}
  \caption{Symbols and Notations.}
  \vspace{-10pt}
  \label{tab:sym}
  \begin{tabular}{@{}cc@{}}
    \toprule
    Symbol &Definition\\
    \midrule
    $\G_1,\G_2$ &input networks\\
    $\mathcal{V}_1, \mathcal{V}_2$ & node sets of $\G_1$ and $\G_2$ \\
    $\mathcal{E}_1, \mathcal{E}_2$ & edge sets of $\G_1$ and $\G_2$ \\
    $\mathbf{A}_1,\mathbf{A}_2$ &adjacency matrices of $\G_1$ and $\G_2$\\
    $\mathbf{X}_1,\mathbf{X}_2$ &node attribute matrices of $\G_1$ and $\G_2$\\
    $\bm{\mu}_1,\bm{\mu}_2$ &probability measures \\
    $n_i,m_i$ & number of nodes/edges in $\G_i$\\
    $\mathcal{L}$ &the set of anchor node pairs\\
    \midrule
    $\mathbf{I},\mathbf{1}$ &an identity matrix and an all-one vector/matrix\\
    $\odot$ &Hadamard product\\
    $\langle\cdot,\cdot\rangle$ &inner product\\
    $\Pi$ &probabilistic coupling\\
    $[\cdot\|\cdot]$ & horizontal concatenation of vectors\\
    \bottomrule
  \end{tabular}
  \vspace{-10pt}
\end{table}

Table~\ref{tab:sym} summarizes the main symbols used throughout the paper. We use bold uppercase letters for matrices (e.g., $\mathbf{A}$), bold lowercase letters for vectors (e.g., $\mathbf{s}$), and lowercase letters for scalars (e.g., $\alpha$).
The transpose of $\mathbf{A}$ is denoted by the superscript $\T$ (e.g., $\mathbf{A}^\T$).
An attributed network with $n$ nodes is represented by $\G=(\mathbf{A},\mathbf{X})$ where $\mathbf{A}\in\mathbb{R}^{n\times n},\mathbf{X}\in\mathbb{R}^{n\times d}$ denote the adjacency matrix and node attribute matrix, respectively. We use $\mathcal{V}$ and $\mathcal{E}$ to denote the node and edge set of a graph, respectively. The semi-supervised attributed network alignment problem can be defined as follows:
\begin{definition}
\textit{Semi-supervised Attributed Network Alignment.}\\
\textbf{Given:} (1) two networks $\G_1=(\mathbf{A}_1,\mathbf{X}_1)$ and $\G_2=(\mathbf{A}_2,\mathbf{X}_2)$; (2) an anchor node set $\mathcal{L}=\{(x,y)|x\in\G_1,y\in\G_2\}$ indicating pre-aligned nodes pairs $(x,y)$.\\
\textbf{Output:} alignment/mapping matrix $\mathbf{S}\in\mathbb{R}^{n_1\times n_2}$, where $\mathbf{S}(x,y)$ indicates how likely node $x\in\G_1$ and node $y\in\G_2$ are aligned.
\end{definition}

\subsection{Embedding-based Network Alignment}
Embedding-based methods learn node embeddings by pulling positive node pairs together while pushing negative node pairs apart in the embedding space via ranking loss functions~\cite{ione, crossmna, bright, nextalign}. Specifically, given a set of anchor node pairs $\mathcal{L}$, the ranking loss can be generally formulated as~\cite{bright}:
\begin{equation}\label{eq:ranking}
    \begin{aligned}
    &\mathcal{J}_{\text{rank}}=\mathcal{J}_1+\mathcal{J}_2+\mathcal{J}_{\text{cross}}\\
        &\text{where }\left\{
        \begin{aligned}
            &\mathcal{J}_1=\sum\nolimits_{x\in\mathcal{L}\cap\G_1}\left(d(x,x_p)-d(x,x_n)\right)\\
            &  \mathcal{J}_2=\sum\nolimits_{y\in\mathcal{L}\cap\G_2}\left(d(y,y_p)-d(y,y_n)\right)\\
            &\mathcal{J}_{\text{cross}}=\sum\nolimits_{(x,y)\in\mathcal{L}}d(x,y)
        \end{aligned}
        \right.,
    \end{aligned}
\end{equation}
where $d(x,y)$ measures the distance between two node embeddings (e.g., $L_1$ norm), $x_p/y_p$ denotes the positive node w.r.t. $x/y$, and $x_n/y_n$ denotes the negative node w.r.t. $x/y$.
In the above equation, $\mathcal{J}_1,\mathcal{J}_2$ are intra-network loss pulling sampled positive nodes (e.g., similar/nearby nodes) together, while pushing sampled negative nodes (e.g., disimilar/distant nodes) far part.
$\mathcal{J}_{\text{cross}}$ is the cross-network loss, which aims to minimize the distance between anchor node pairs.
In general, the objective in Eq.~\eqref{eq:ranking} indirectly models the node relationship between two non-anchor nodes $(x',y')$ via a path through the anchor node pair $(x,y)$, i.e., $((x',x),(x,y),(y,y'))$.

\subsection{Optimal Transport}
OT has achieved great success in graph applications, such as network alignment~\cite{gwl,s-gwl,parrot,slotalign} and graph classification~\cite{qian2024reimagining,dong2020copt,zeng2023generative,zeng2024graph}.
Following a common practice~\cite{titouan2019optimal}, a graph can be represented as a probability measure supported on the product space of node attribute and structure, i.e., $\bm{\mu}=\sum_{i=1}^{n}\mathbf{h}(i)\delta_{\mathbf{A}(x_i),\mathbf{X}(x_i)}$, where $\mathbf{h}\in\Delta_{n}$ is a histogram representing the node weight and $\delta$ is the Dirac function. The fused Gromov-Wasserstein (FGW) distance is the sum of node pairwise distances based on node attributes and graph structure defined as~\cite{tam2019optimal}:
\begin{definition}\label{def:fgw} \textit{Fused Gromov-Wasserstein (FGW) distance.}\\
\textbf{Given:} (1) two graphs $\G_1=(\mathbf{A}_1,\mathbf{X}_1),\G_2=(\mathbf{A}_2,\mathbf{X}_2)$; (2) probability measures $\bm{\mu}_1,\bm{\mu}_2$ on graphs; (3) intra-network cost matrix $\mathbf{C}_1,\mathbf{C}_2$; (4) cross-network cost matrix $\mathbf{M}$.\\
\textbf{Output:} the FGW distance between two graphs $\textup{FGW}_{q,\alpha}(\G_1,\G_2)$
\vspace{-5pt}
\begin{equation}\label{eq:fgwd}
    \begin{aligned}
        &\min_{\mathbf{S}\in\Pi(\bm{\mu}_1,\bm{\mu}_2)}(1-\alpha)\sum_{x\in\G_1, y\in\G_2}\mathbf{M}^q(x,y)\mathbf{S}(x,y)\\
        &+ \alpha \!\!\!\!\!\sum_{x,x'\in\G_1\atop y,y'\in\G_2}\!\!\!\!|\mathbf{C}_1(x,x')-\mathbf{C}_2(y,y')|^q\mathbf{S}(x,y)\mathbf{S}(x',y').
    \end{aligned}
\end{equation}
\end{definition}
\vspace{-5pt}
The first term corresponds to the Wasserstein distance measuring cross-network node distances, and the second term is the Gromov-Wasserstein (GW) distance measuring cross-network edge distances. The hyperparameter $\alpha$ controls the trade-off between two terms, and $q$ is the order of the FGW distance, which is adopted as $q=2$ throughout the paper. The FGW problem aims to find an OT mapping $\mathbf{S}\in\Pi(\bm{\mu}_1, \bm{\mu}_2)$ that minimizes the sum of Wasserstein and GW distances, and the resulting OT mapping matrix $\mathbf{S}$ further serves as the soft node alignment.
\vspace{-5pt}
\section{Methodology}\label{sec:method}

In this section, we present the proposed \algname. We first analyze the mutual benefits between embedding and OT-based methods in Section~\ref{subsec:analysis}. Guided by such analysis, a unified framework named \algname\ is proposed for network alignment in Section~\ref{subsec:overview}. We further present the unified model training schema in Section~\ref{subsec:train}, followed by convergence and complexity analysis of \algname\ in Section~\ref{subsec:proof}.

\vspace{-2pt}
\subsection{Mutual Benefits of Embedding and OT}\label{subsec:analysis}
\subsubsection{OT-Empowered Embedding Learning}
The success of ranking loss largely depends on sampled positive and negative node pairs, i.e., $(x,x_p),(x,x_n),(y,y_p),(y,y_n)$ in Eq.~\eqref{eq:ranking}, through which cross-network node pair relationships can be modeled.
To provide a better sampling strategy, the OT mapping improves the embedding learning from two aspects: \textit{direct modeling} and \textit{robustness}.
First (\textit{direct modeling}), while embedding-based methods model cross-network node relationships via an indirect path (see Figure~\ref{fig:sampling} for an example.) sampled by hand-crafted strategies, the OT mapping directly models such cross-network relationships, identifying positive and negative node pairs more precisely.
Second (\textit{robustness}), in contrast to the noisy embedding alignment, thanks to the marginal constraints in Eq.~\eqref{eq:fgwd}, the resulting OT mapping is noise-reduced~\cite{parrot,tam2019optimal}, where each node only aligns with very few nodes.
Therefore,  OT-based sampling strategy can be robust to graph noises.

\subsubsection{Embedding-Empowered OT Learning}
The success of OT-based alignment methods largely depend on the cost design, i.e. $\mathbf{C}_1$, $\mathbf{C}_2$, and $\mathbf{M}$ in Eq.~\eqref{eq:fgwd}, which is often hand-crafted in existing works.
To achieve better cost design, embedding learning benefits OT learning from two aspects: \textit{generalization} and $\textit{effectiveness}$.
For one thing (\textit{generalization}), building transport cost upon learnable embeddings opens the door for end-to-end training paradigm, thus, the OT framework can be generalized to different graphs without extensive parameter tuning. For another (\textit{effectiveness}), neural networks generate more powerful node embeddings via deep transformations, enhancing the cost design for OT optimization.

\vspace{-2pt}
\subsection{Model Overview}\label{subsec:overview}

\begin{figure}[t]
  \includegraphics[width=\linewidth]{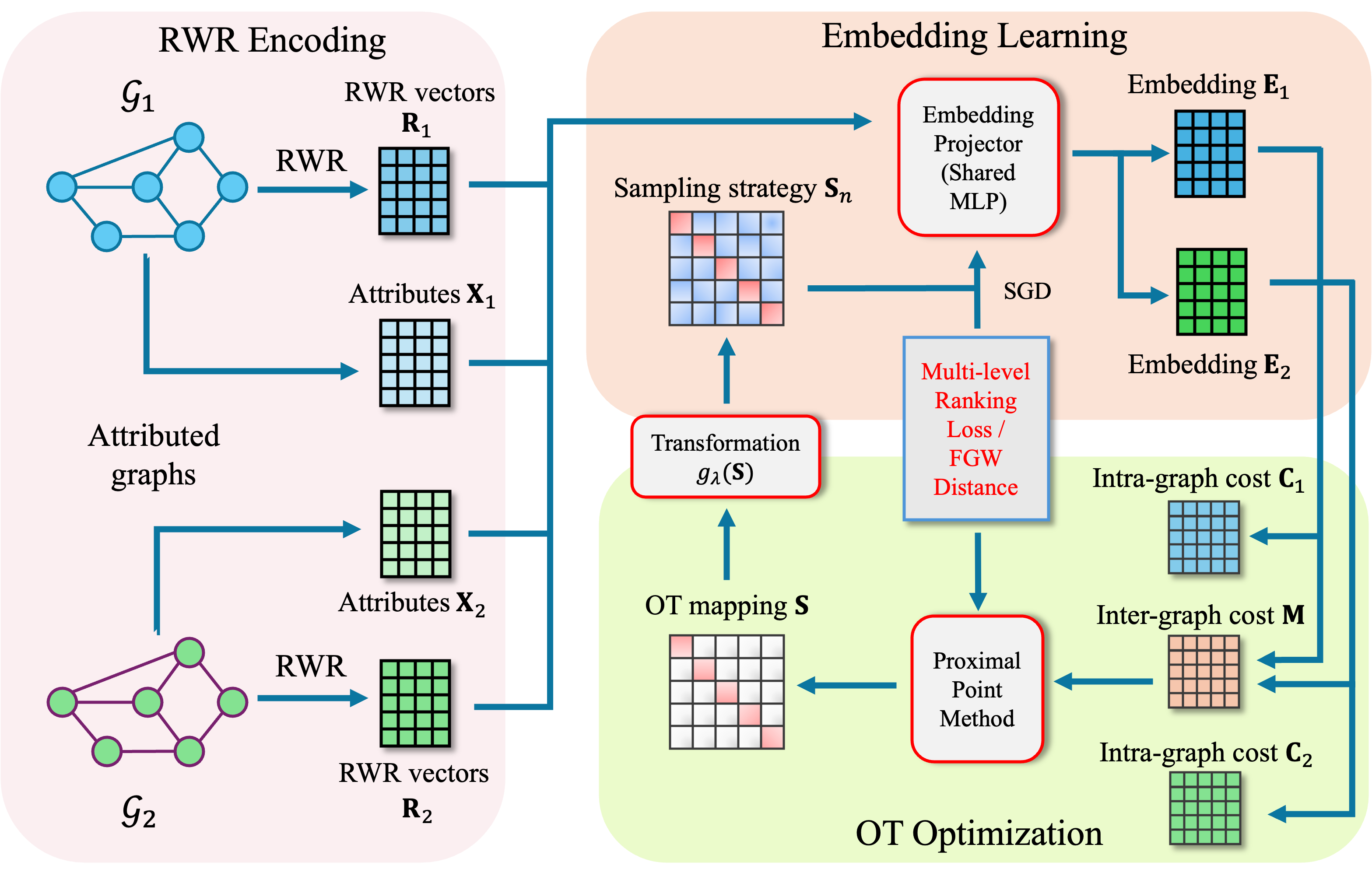}
  \caption{An overview of \algname, including RWR encoding, embedding learning and OT optimization. RWR encoding and raw node attributes are processed by a shared MLP, supervised by the ranking loss based on the OT-based sampling strategy. The OT mapping is optimized via cost matrices derived from the learned embeddings, further transformed into a sampling strategy by the learnable transformation $g_\lambda$.
  }
  \vspace{-10pt}
  \label{fig:model}
\end{figure}

The overall framework of \algname\ is given in Figure~\ref{fig:model}, which can be divided into three parts: (1) \textit{RWR encoding} for structure learning, (2) \textit{embedding learning} via multi-level ranking loss with OT-based sampling, (3) $\textit{OT optimization}$ with learnable transport cost.

Positional information plays a pivotal role in network alignment~\cite{bright,parrot}, but most of the GNN architectures fall short in capturing such information for alignment~\cite{you2019position}.
Therefore, we explicitly encode positional information by conducting random walk with restart (RWR)~\cite{tong2006fast}.
By regarding a pair of anchor nodes $(x, y)\in\mathcal{L}$ as identical in the RWR embedding space, we simultaneously perform RWR w.r.t. $x\in\G_1$ and $y\in\G_2$ to construct a unified embedding space, where the RWR score vectors $\mathbf{r}_x\in\mathbb{R}^{n_1}$ and $\mathbf{r}_y\in\mathbb{R}^{n_2}$ can be obtained by~\cite{tong2006fast,bright}
\begin{equation}\label{eq:rwr}
    \mathbf{r}_x = (1-\beta)\mathbf{W}_1\mathbf{r}_x + \beta\mathbf{e}_x,
    ~~~\mathbf{r}_y = (1-\beta)\mathbf{W}_2\mathbf{r}_y + \beta\mathbf{e}_y,
\end{equation}
where $\beta$ is the restart probability, $\mathbf{W}_i=\left(\mathbf{D}_i^{-1}\mathbf{A_i}\right)^\T$ is the transpose of the row-normalized adjacency matrix, $\mathbf{D}_i$ is the diagonal degree matrix of $\G_i$, and $\mathbf{e}_x,\mathbf{e}_y$ are one-hot encoding vectors with $\mathbf{e}_x(x)=1$ and $\mathbf{e}_y(y)=1$.
The concatenation of RWR vectors w.r.t. different anchor nodes $\mathbf{R}_i\in\mathbb{R}^{n_i\times|\mathcal{L}|}$, together with node attribute matrices $\mathbf{X}_i$, i.e., $[\mathbf{R}_i\|\mathbf{X}_i]$, serve as the input for embedding learning.

To learn powerful node embeddings, we train a shared two-layer multi-layer perceptron (MLP) with residual connections $f_\theta$ via a multi-level ranking loss.
To address the limitations of hand-crafted sampling strategies, we apply a simple yet effective transformation $g_\lambda$ on the OT mapping $\mathbf{S}$ to obtain an adaptive sampling strategy $g_\lambda(\mathbf{S})$. Then, the sampled node and edge pairs based on $g_\lambda(\mathbf{S})$ are utilized for learning output embeddings $\mathbf{E}_1$ and $\mathbf{E}_2$, supervised by the multi-level ranking loss.

To improve OT optimization, we construct the cross-network cost matrix $\mathbf{M}$ and intra-network cost matrices $\mathbf{C}_1, \mathbf{C}_2$ based on output embeddings $\mathbf{E}_1$ and $\mathbf{E}_2$ of the MLP as follows
\vspace{-2pt}
\begin{equation}\label{eq:cost}
    \mathbf{M} = e^{-\mathbf{E}_1\mathbf{E}_2^{\T}},~~~\mathbf{C}_i = e^{-\mathbf{E}_i\mathbf{E}_i^{\T}}\odot\mathbf{A}_i,
\end{equation}
\vspace{-2pt}
where $\mathbf{M}(x,y)$ is the cross-network node distance between $x\in\G_1,y\in\G_2$, and $\mathbf{C}_i(a,b)$ is the intra-network node distance between $a,b\in\G_i$\footnote{We use $\mathbf{C}_i$ to encode edge information in two graphs with $\mathbf{C}_i(a,b)=0, \forall (a,b)\notin\mathcal{E}_i$.}. Afterwards, the FGW distance in Eq.~\eqref{eq:fgwd} can be efficiently solved via the proximal point method~\cite{s-gwl,parrot}, whose output OT mapping $\mathbf{S}$ indicates the node alignment between two graphs.

For model training, we propose an objective function which, as we will show in the next subsection, unifies OT optimization and embedding learning as follows:
\begin{equation}\label{eq:object}
    \begin{aligned}
        &\mathop{\min}\limits_{\mathbf{S}\in\Pi(\bm{\mu}_1,\bm{\mu}_2), \lambda,\theta}\mathcal{J}(\mathbf{S,\lambda,\theta})=(1-\alpha)\underbrace{\!\!\!\!\!\!\sum_{x\in\G_1, y\in\G_2}\!\!\!\!\!\!\mathbf{M}(x,y;\theta)\sn(x,y;\lambda)}_{\text{Wasserstein/node-level loss}}\\
        &+ \alpha \!\!\!\!\!\underbrace{\sum_{x,x'\in\G_1\atop y,y'\in\G_2}\!\!\!\!|\mathbf{C}_1(x,x';\theta)-\mathbf{C}_2(y,y';\theta)|^2\sn(x,y;\lambda)\sn(x',y';\lambda)}_{\text{GW/edge-level loss}},
    \end{aligned}
\end{equation}
\vspace{-2pt}
where $\mathbf{S}$ is the OT mapping, $\theta$ is the set of learnable parameters in the MLP model $f_\theta$, $\sn$ is the adaptive sampling strategy after transformation (i.e., $\sn=g_\lambda(\mathbf{S})$) , and $\alpha$ is a hyper-parameter that controls the relative importance between Wasserstein distance/node-level ranking loss and GW distance/edge-level ranking loss. Through alternating optimization, both OT mapping $\mathbf{S}$ and node embeddings $\mathbf{E}_1,\mathbf{E}_2$ can be optimized in a mutually beneficial manner. The overall algorithm is summarized in Algorithm~\ref{algo:jeona} in Appendix~\ref{app:algo}.

\vspace{-2pt}
\subsection{Unified Model Training}\label{subsec:train}
In this subsection, we present the model training framework under a unified objective function. Through a simple yet effective transformation, the FGW distance and multi-level ranking loss are combined into a single objective (Subsection~\ref{subsection:unifyingloss}), which can be efficiently optimized using an alternating optimization scheme with guaranteed convergence (Subsection~\ref{subsection:modeltraining}).

\vspace{-2pt}
\subsubsection{Unifying FGW Distance and Multi-level Ranking Loss}\label{subsection:unifyingloss}

The FGW distance is a powerful objective for network alignment, and has been adopted by several works~\cite{gwl,got2,slotalign,combalign} to supervise embedding learning.
In general, based on the Envelop theorem~\cite{afriat1971theory}, existing methods based on the FGW objective~\cite{gwl,got2,slotalign,combalign} optimize the cost matrices under the fixed OT mapping $\mathbf{S}$, whose gradients further guide the learning of feature encoder $f_\theta$.
However, due to the non-negativity of $\mathbf{S}$, directly minimizing FGW distance leads to trivial solutions where cost matrices $\mathbf{M},\mathbf{C}_1,\mathbf{C}_2$ become zero matrices, hence leading to embedding collapse illustrated in Proposition~\ref{prop:collapse}.
\begin{proposition}
{\normalfont\textsc{(Embedding Collapse).}}
Given two networks $\G_1,\G_2$, directly optimizing feature encoder $f_\theta$ with the FGW distance leads to embedding collapse, that is $\mathbf{E}_1(x)=\mathbf{E}_2(y), \forall x\in\G_1,y\in\G_2$, where $\mathbf{E}_1=f_\theta(\G_1),\mathbf{E}_2=f_\theta(\G_2)$.
\label{prop:collapse}
\end{proposition}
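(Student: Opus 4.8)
The plan is to show that the collapsed configuration is exactly what a gradient-based minimization of the FGW objective in \eqref{eq:fgwd} drives $f_\theta$ toward, by pairing a global lower bound with an analysis of how the cost matrices in \eqref{eq:cost} depend on the embeddings. First I would record the elementary but crucial fact that the FGW objective is non-negative: every coupling entry $\mathbf{S}(x,y)\ge 0$, every cost $\mathbf{M}(x,y)=e^{-\mathbf{E}_1(x)\cdot\mathbf{E}_2(y)}$ and $\mathbf{C}_i(a,b)=e^{-\mathbf{E}_i(a)\cdot\mathbf{E}_i(b)}\mathbf{A}_i(a,b)$ is a non-negative quantity, and the GW integrand $|\mathbf{C}_1-\mathbf{C}_2|^2$ is a square. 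Hence both the Wasserstein and the GW terms are non-negative, so $\mathrm{FGW}_{q,\alpha}(\G_1,\G_2)\ge 0$ for every $\theta$, and its infimum over $\theta$ is bounded below by $0$.

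Second, I would exhibit a family of collapsed embeddings that drives the objective to this lower bound. Fix a unit vector $\mathbf{u}$ and set $\mathbf{E}_1(x)=\mathbf{E}_2(y)=t\,\mathbf{u}$ for all nodes, with $t\to\infty$. Then every inner product equals $t^2$, so $\mathbf{M}(x,y)=e^{-t^2}\to 0$ uniformly and the Wasserstein term vanishes since $\mathbf{S}$ has unit mass; moreover $\mathbf{C}_i=e^{-t^2}\mathbf{A}_i\to 0$, so $|\mathbf{C}_1(x,x')-\mathbf{C}_2(y,y')|^2=e^{-2t^2}|\mathbf{A}_1(x,x')-\mathbf{A}_2(y,y')|^2\to 0$ and the GW term vanishes as well. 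Thus the collapsed family attains the infimum $0$, certifying that embedding collapse is a global (limiting) minimizer of the FGW objective over $\theta$.

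Third, to argue that the minimization actually converges to this configuration rather than merely admitting it, I would analyze the descent direction obtained under the envelope theorem~\cite{afriat1971theory}, where $\mathbf{S}$ is held fixed and $\theta$ is updated through the induced costs. Differentiating the Wasserstein term gives $\partial\mathbf{M}(x,y)/\partial\mathbf{E}_1(x)=-\mathbf{E}_2(y)\,\mathbf{M}(x,y)$, so the negative gradient adds to $\mathbf{E}_1(x)$ a positive multiple of $\sum_y \mathbf{S}(x,y)\mathbf{E}_2(y)$, a weighted average of the opposite network's embeddings, and symmetrically for $\mathbf{E}_2$; the GW term contributes an analogous averaging within each network. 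These updates simultaneously pull every embedding toward a common vector while increasing its magnitude, exactly the motion toward the collapsed infimizer of the previous step.

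The main obstacle I anticipate is that the infimum $0$ is not attained at any finite embedding, since it is a limit as the norm diverges, so the collapse must be stated as the asymptotic behavior of the optimizer rather than as an exact stationary point; care is needed to show that no well-separated configuration can match this limiting value, which hinges on the fact that only a common embedding vector makes all pairwise inner products positive and hence lets the uniform scaling $e^{-t^2}$ annihilate all three cost matrices at once.
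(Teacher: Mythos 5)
Your second step is correct as far as it goes: the collapsed family $\mathbf{E}_1(x)=\mathbf{E}_2(y)=t\,\mathbf{u}$ with $t\to\infty$ does drive both the Wasserstein and GW terms to the lower bound $0$. But this establishes only \emph{sufficiency} — that collapse is one minimizing configuration — whereas the proposition (and the paper's proof) asserts \emph{necessity}: minimizing the FGW objective forces the costs, and hence the embeddings, to collapse. The paper's argument runs in exactly the direction you are missing, and it rests on two ingredients absent from your attempt. First, because $\mathbf{S}$ is obtained from an entropy-regularized solver, it is strictly positive, $\mathbf{S}(x,y)>0$ for \emph{all} pairs; this is what turns ``Wasserstein term $=0$'' into ``$\mathbf{M}(x,y)=0$ for every $(x,y)$.'' Without strict positivity your argument cannot exclude a sparse plan under which the objective vanishes while $\mathbf{M}$ stays arbitrary off the support of $\mathbf{S}$. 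Second, for the GW term the paper exploits the structural fact $\mathbf{C}_i(x,x)=0$ (the Hadamard product with $\mathbf{A}_i$ zeroes the diagonal): taking $y_1=y_2=y$ in $|\mathbf{C}_1(x_1,x_2)-\mathbf{C}_2(y_1,y_2)|=0$ forces $\mathbf{C}_1\equiv 0$, and symmetrically $\mathbf{C}_2\equiv 0$, so each network's embeddings collapse internally, after which the Wasserstein term merges the two networks' collapsed points. Your proposal contains no counterpart to either step.

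The patch you sketch for the necessity direction is moreover false as stated: it is not true that ``only a common embedding vector makes all pairwise inner products positive.'' Any family of embedding directions contained in an open cone of pairwise-acute angles has all pairwise inner products positive, and uniformly scaling the norms to infinity annihilates $\mathbf{M}$ and both $\mathbf{C}_i$ without the directions ever coinciding — so well-separated configurations \emph{can} match your limiting value, and your route to excluding them collapses. Your gradient-dynamics paragraph does not repair this: showing that one step of descent moves $\mathbf{E}_1(x)$ toward the conic combination $\sum_y \mathbf{S}(x,y)\mathbf{M}(x,y)\mathbf{E}_2(y)$ (note the weight carries the factor $\mathbf{M}(x,y)$, which you dropped) is a heuristic about one step's direction, not a convergence proof, and the paper wisely avoids dynamics altogether in favor of a static argument about minimizers. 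To your credit, your closing observation that the infimum is attained only asymptotically as norms diverge is a genuine subtlety that the paper's own proof smooths over with the word ``essentially''; but the load-bearing steps of the paper's proof — strict positivity of the entropic plan and the zero-diagonal trick — are precisely what your attempt lacks.
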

The proof can be be found in Appendix~\ref{app:proof}. In general, due to the non-negativity of FGW distance~\cite{titouan2019optimal}, its minimal value of zero is achieved by projecting all nodes to identical embeddings, hence significantly degrading embeddings' discriminating power.

To alleviate embedding collapse, we propose a transformation $g_\lambda:\mathbb{R}_{\geq 0}^{n_1\times n_2}\to\mathbb{R}^{n_1\times n_2}$ to transform the non-negative OT mapping $\S$ into an adaptive node sampling matrix $\sn=g_\lambda(\mathbf{S})$ to discern the positive samples from the negative ones together with sampling weights.
In this work, we adopt $g_\lambda(\mathbf{S})=\mathbf{S}-\lambda\mathbf{1}_{n_1\times n_2}$, where $\lambda$ is a learnable transformation parameter.
The rationale behind such design is to find the optimal threshold $\lambda$ to distinguish between positive and negative pairs automatically.
Moreover, as the absolute value of $\S_n(x,y)$ indicates the certainty of sampling node pair $(x,y)$ as positive/negative pairs, it helps distinguish easy and hard samples for the ranking loss.
Equipped with such adaptive sampling matrix $\sn$, we quantitatively attribute the effectiveness of FGW distance to the following two aspects: \textit{node-level ranking} and \textit{edge-level ranking}.

\noindent\textbf{Wasserstein Distance as Node-Level Ranking Loss.}
Equipped with the sampling strategy $\sn$, the Wasserstein distance term can be reformulated as a node-level ranking loss as follows
\vspace{-2pt}
\begin{equation}\label{eq:w_rank}
    \begin{aligned}
        \mathcal{J}_{\text{w}}&=\!\!\!\sum_{(x,y)\in\mathcal{V}_1\times \mathcal{V}_2}\!\!\!\mathbf{M}(x,y)\sn(x,y)\\
        &=\!\!\!\sum_{(x,y_p)\in\mathcal{R}^+} \!\!\!\!\!\mathbf{M}(x, y_p)|\sn(x, y_p)|-\!\!\!\!\!\!\!\sum_{(x,y_n)\in\mathcal{R}^-}\!\!\!\!\!\mathbf{M}(x, y_n)|\sn(x, y_n)|\\
        \text{wh}&\text{ere } \mathcal{R}^+\!\!\!=\!\{(x, y_p) | \sn(x, y_p) \!\!\geq\! 0\},\mathcal{R}^-\!\!\!\!=\!\{(x, y_n) | \sn(x, y_n) \!\!<\! 0\}.
    \end{aligned}
\end{equation}
\vspace{-2pt}
$\mathcal{R}^+$ and $\mathcal{R}^-$ are the sets of positive and negative node pairs, respectively. Therefore, Eq.~\eqref{eq:w_rank} can be viewed as a weighted ranking loss function at the \textit{node} level, where the sign of $\sn(x,y)$ is used to distinguish between positive and negative node pairs and the sampling weight $|\sn(x,y)|$ indicates the certainty of the sampled positive/negative node pair. For example, $(x,y)$ is regarded as an uncertain pair and should contribute little to the ranking loss if $\mathbf{S}(x,y)$ is close to the threshold $\lambda$. Similarly, if $\mathbf{S}(x,y)$ is far away from $\lambda$, the relationship between $(x,y)$ is more certain and should contribute more to the ranking loss. Therefore, $\sn$ directly models \textit{all} cross-network pairs $(x,y)\in\mathcal{V}_1\times\mathcal{V}_2$ with noise-reduced certainty values.
To this point, we provide a unified view of the Wasserstein distance and the node-level ranking loss.

Another limitation of the existing ranking loss is that it only considers node relationships while ignores the modeling of edge relationships, hence may fall short in preserving graph structure in the node embedding space~\cite{dlna,slotalign}. To address this issue, we introduce a novel ranking loss function at edge-level and unify it with the GW distance.

\noindent\textbf{Gromov-Wasserstein Distance as Edge-Level Ranking Loss.}
The GW distance term can be reformulated as an edge-level ranking loss as follows,

\vspace{-10pt}
\begin{equation}\label{eq:gw_rank}
\begin{aligned}
    \mathcal{J}_{\text{gw}}=&\sum_{\substack{x, x'\in\mathcal{G}_1,\atop y,y'\in\mathcal{G}_2}}|\mathbf{C}_1(x, x')-\mathbf{C}_2(y, y')|^2\sn(x, y)\sn(x', y')\\
    =&\sum_{(e_{x,x'},e_{y_p,y_p'})\in\mathcal{T}^+}d_e(e_{x,x'},e_{y_p,y_p'})|\se(e_{x,x'},e_{y_p,y_p'})| -\\
    &\sum_{(e_{x,x'},e_{y_n,y_n'})\in\mathcal{T}^-}d_e(e_{x,x'},e_{y_n,y_n'})|\se(e_{x,x'},e_{y_n,y_n'})|\\
    \text{where }&\left\{
    \begin{aligned}
        &d_e(e_{x,x'},e_{y,y'})=|\mathbf{C}_1(x,x')-\mathbf{C}_2(y,y')|^2\\
        &\se(e_{x,x'},e_{y,y'})=\sn(x,y)\sn(x',y')\\
        &\mathcal{T}^+\!\!\!=\!\{(e_{x,x'},e_{y_p,y_p'}) |\se(e_{x,x'},e_{y_p,y_p'}) \!\!\geq\! 0\}\\
        &\mathcal{T}^-\!\!\!=\!\{(e_{x,x'},e_{y_n,y_n'}) | \se(e_{x,x'},e_{y_n,y_n'}) \!\!<\! 0\}
    \end{aligned},
    \right.
\end{aligned}
\end{equation}
\noindent where $e_{x,x'}$ is the edge between $x$ and $x'$, $d_e$ measures the distance between two edges, and $\mathcal{T}^+,\mathcal{T}^-$ are the sets of positive and negative edge pairs sampled by the edge sampling strategy $\se$.
Similar to Eq.~\eqref{eq:w_rank}, Eq.~\eqref{eq:gw_rank} is a weighted ranking loss at the \textit{edge} level, where the sign of $\se(e_{x,x'},e_{y,y'})$ distinguishes between positive and negative edge pairs and the sampling weight $|\se(e_{x,x'},e_{y,y'})|$ indicates the certainty of the sampled positive/negative edge pair.
From the view of line graph~\cite{dlna}, where edges in the original graph are mapped into nodes in the line graph and vice versa, the edge ranking loss in the original graph can be interpreted as the node ranking loss in the corresponding line graph.

\subsubsection{Optimization}~\label{subsection:modeltraining}
Combining the node-level ranking loss (Wasserstein distance) and edge-level ranking loss (GW distance) gives the unified optimization objective of \algname\ for both embedding learning and OT optimization as Eq.~\eqref{eq:object}. To optimize this objective, we adopt an alternating optimization scheme where the parameters of feature encoder $f_\theta$, transformation parameter $\lambda$, and OT mapping $\mathbf{S}$ are optimized iteratively.

Specifically, for the $k$-th iteration, we first fix the feature encoder $f_\theta^{(k)}$ and the transformation parameter $\lambda^{(k)}$, and optimize Eq.~\eqref{eq:object} w.r.t $\mathbf{S}$ by the proximal point method~\cite{s-gwl}. 
Due to the non-convexity of the objective, proximal point method decomposes the non-convex problem into a series of convex subproblems plus an additional Bregman divergence between two consecutive solutions, where each subproblem can be formulated as follows

\vspace{-10pt}
\begin{equation}
    \mathbf{S}^{(t+1)} = \mathop{\arg\min}\limits_{\mathbf{S}\in\Pi(\bm{\mu}_1, \bm{\mu}_2)} \mathcal{J}(\mathbf{S};\lambda^{(k)}, \theta^{(k)}) + \gamma_p \text{Div}(\mathbf{S}\|\mathbf{S}^{(t)}),
\label{eq:ot_cpot}
\end{equation}
\noindent where $t$ is the number of proximal point iteration, $\gamma_p$ is the weight for the proximal operator, and $\text{Div}$ is the Bregman divergence between two OT mappings. Then, the resulting subproblem in Eq.~\eqref{eq:ot_cpot} can be transformed into an entropy-regularized OT problem as
\begin{equation}
\begin{aligned}
&\min_{\mathbf{S}\in\Pi(\boldsymbol{\mu}_1,\boldsymbol{\mu}_2)}\underbrace{\langle \mathbf{C}_{\text{total}}^{(t)}, \mathbf{S}\rangle + \gamma_p\langle\log\mathbf{S}, \mathbf{S}\rangle}_{\text{entropy-regularized OT}} - \underbrace{\left<(1-\alpha)\mathbf{M}+\alpha\mathbf{L}^{(t)}_{\text{gw}}, \lambda\right>}_{\text{constant}}\\
&\text{where }\left\{
\begin{aligned}
    &\mathbf{C}_{\text{total}}^{(t)}=(1-\alpha)\mathbf{M}+ \alpha\mathbf{L}^{(t)}_{\text{gw}}-\gamma_p\log\mathbf{S}^{(t)}\\
    &\mathbf{L}_{\text{gw}}^{(t)}=\mathbf{C}_1^2\sn^{(t)}\mathbf{1}_{n_2\times n_2} +\mathbf{1}_{n_1\times n_1}\sn^{(t)}\mathbf{C}_2^{2^\T} -2\mathbf{C}_1\sn^{(t)}\mathbf{C}_2^\T\\
    &\sn^{(t)}=\mathbf{S}^{(t)}-\lambda^{(k)}\mathbf{1}_{n_1\times n_2}
\end{aligned}.
\right.
\end{aligned}
\label{eq:ot_cpot_final}
\end{equation}

Note that $\mathbf{S}^{(t)}$ is the OT mapping from last proximal point iteration and remains fixed in the above equation. Therefore, the objective function of each proximal point iteration in Eq.~\eqref{eq:ot_cpot_final} is essentially an entropy-regularized OT problem with a fixed transport cost $\mathbf{C}_{\text{total}}^{(t)}$ minus a constant term that does not affect the optimization. Eq.~\eqref{eq:ot_cpot_final} can be solved efficiently by the Sinkhorn algorithm~\cite{peyre2019computational}.

Then, we fix the feature encode $f_\theta^{(k)}$ and OT mapping $\mathbf{S}^{(k+1)}$, and optimize Eq.~\eqref{eq:object} w.r.t the transformation parameter $\lambda$. Since the objective function is quadratic w.r.t. $\lambda$, the closed-form solution for $\lambda^{(k+1)}$ can be obtained by setting $\partial\mathcal{J}/\partial{\lambda}=0$ as follows
\begin{equation}\label{eq:opt_lambda}
    \small
    \begin{aligned}
    &\lambda^{(k+1)}=\frac{(1-\alpha)\mathcal{K}_1 +\alpha\mathcal{K}_2}{2\alpha\mathcal{K}_3}\\
    &\text{where }\left\{
        \begin{aligned}
            &\mathcal{K}_1=\bignegspace\sum_{x\in\G_1,y\in\G_2}\bignegspace\mathbf{M}(x,y;\theta^{(k)})\\
            &\mathcal{K}_2=\bignegspace\bignegspace\sum\limits_{\substack{x,x'\in\G_1\\y,y'\in\G_2}} \bignegspace\bignegspace d_e\!\left(\!e_{x,x'},e_{y,y'};\theta^{(k)}\!\right)\!\left(\mathbf{S}^{(k+1)}\!(x,y)\!+\!\mathbf{S}^{(k+1)}\!(x',y')\!\right)\\
            &\mathcal{K}_3=\bignegspace\bignegspace\sum\limits_{\substack{x,x'\in\G_1\\y,y'\in\G_2}} \bignegspace\bignegspace d_e\left(e_{x,x'},e_{y,y'};\theta^{(k)}\right)\\
        \end{aligned}
        \right.
    \end{aligned}
\end{equation}

Finally, to optimize the feature encoder $f_\theta$, we fix the transformation parameter $\lambda^{(k+1)}$ and the OT mapping $\mathbf{S}^{(k+1)}$ to optimize Eq.~\eqref{eq:object} w.r.t $\theta$ via stochastic gradient descent (SGD), that is
\begin{equation}\label{eq:opt_sgd}
    \theta^{(k+1)}={\arg\min}_{\theta}\mathcal{J}(\theta;\mathbf{S}^{(k+1)},\lambda^{(k+1)}).
\end{equation}

As we will show later, by iteratively applying Eq.~\eqref{eq:ot_cpot}-\eqref{eq:opt_sgd}, the objective function in Eq.~\eqref{eq:object} converges under the alternating optimization scheme.
Besides, it is worth noting that alternating optimization is only used for model training, while model inference only requires one-pass, i.e., the forward pass of MLP and the proximal point method for OT optimization, allowing \algname\ to scale efficiently to large networks.

\vspace{-5pt}
\subsection{Proof and Analysis}\label{subsec:proof}
In this subsection, we provide theoretical analysis of the proposed \algname. Without loss of generality, we assume that graphs share comparable numbers of nodes (i.e., $\mathcal{O}(n_1)\approx\mathcal{O}(n_2)\approx\mathcal{O}(n)$) and edges (i.e., $\mathcal{O}(m_1)\approx\mathcal{O}(m_2)\approx\mathcal{O}(m)$).
We first provide the convergence analysis of \algname, followed by complexity analysis.

\begin{theorem}
    {\normalfont \textsc{(Convergence of \algname)}} The unified objective for \algname\ in Eq.~\eqref{eq:object} is non-increasing and converges along the alternating optimization.
\label{theo:convergence}
\end{theorem}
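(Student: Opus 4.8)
The plan is to establish convergence by the standard argument for alternating (block coordinate) minimization: show that each of the three update steps in Eq.~\eqref{eq:ot_cpot}--\eqref{eq:opt_sgd} does not increase the objective $\mathcal{J}(\mathbf{S},\lambda,\theta)$, so that the sequence of objective values is monotonically non-increasing; then combine monotonicity with a lower bound on $\mathcal{J}$ to invoke the monotone convergence theorem. Concretely, I would index the full sweep by $k$ and track $\mathcal{J}(\mathbf{S}^{(k)},\lambda^{(k)},\theta^{(k)})$ across the three sub-updates.

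First I would analyze the $\mathbf{S}$-update. Here the subtlety is that we are not exactly minimizing $\mathcal{J}$ in one shot but running the proximal point method of Eq.~\eqref{eq:ot_cpot}, which adds a Bregman divergence $\gamma_p\,\mathrm{Div}(\mathbf{S}\|\mathbf{S}^{(t)})$. The key observation is that at the solution $\mathbf{S}^{(t+1)}$ of each proximal subproblem we have
\begin{equation}
\mathcal{J}(\mathbf{S}^{(t+1)};\lambda^{(k)},\theta^{(k)}) + \gamma_p\,\mathrm{Div}(\mathbf{S}^{(t+1)}\|\mathbf{S}^{(t)}) \le \mathcal{J}(\mathbf{S}^{(t)};\lambda^{(k)},\theta^{(k)}),
\end{equation}
since $\mathbf{S}^{(t)}$ is feasible and $\mathrm{Div}(\mathbf{S}^{(t)}\|\mathbf{S}^{(t)})=0$. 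Because the Bregman divergence is non-negative, this forces $\mathcal{J}(\mathbf{S}^{(t+1)};\cdot)\le\mathcal{J}(\mathbf{S}^{(t)};\cdot)$, so the inner proximal iterations are themselves non-increasing and the $\mathbf{S}$-block update decreases $\mathcal{J}$. Next, for the $\lambda$-update I would use that $\mathcal{J}$ is a quadratic in $\lambda$ with positive leading coefficient $\alpha\mathcal{K}_3\ge 0$ (since $\mathcal{K}_3$ is a sum of squared cost differences $d_e\ge 0$), so the closed-form stationary point $\lambda^{(k+1)}$ of Eq.~\eqref{eq:opt_lambda} is the exact global minimizer over $\lambda$, giving $\mathcal{J}(\cdot;\lambda^{(k+1)},\cdot)\le\mathcal{J}(\cdot;\lambda^{(k)},\cdot)$. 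Finally, the $\theta$-update of Eq.~\eqref{eq:opt_sgd} is defined as an $\arg\min$ over $\theta$, so it too cannot increase $\mathcal{J}$ (at worst, SGD returns the incumbent $\theta^{(k)}$).

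Chaining the three inequalities yields $\mathcal{J}(\mathbf{S}^{(k+1)},\lambda^{(k+1)},\theta^{(k+1)})\le\mathcal{J}(\mathbf{S}^{(k)},\lambda^{(k)},\theta^{(k)})$, establishing monotonicity. To conclude convergence I would argue $\mathcal{J}$ is bounded below: the objective is the FGW-type expression of Eq.~\eqref{eq:object} evaluated with $\sn=\mathbf{S}-\lambda\mathbf{1}$, and I would exhibit a finite lower bound on this quantity over the feasible region (e.g.\ using boundedness of the cost matrices $\mathbf{M},\mathbf{C}_1,\mathbf{C}_2$, which take values in a bounded range by the exponential form in Eq.~\eqref{eq:cost}, together with the fixed total mass of the coupling polytope $\Pi(\bm\mu_1,\bm\mu_2)$). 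A non-increasing sequence that is bounded below converges by the monotone convergence theorem, which is exactly the claim.

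The main obstacle I anticipate is the $\lambda$-step, and more subtly the interaction between the transformation $\sn=g_\lambda(\mathbf{S})$ and the two other blocks. Unlike a textbook alternating minimization where each block variable appears cleanly, here $\lambda$ enters both the node-level and edge-level terms (quadratically through the products $\sn(x,y)\sn(x',y')$), so I must verify the leading coefficient is genuinely non-negative to guarantee the stationary point is a minimizer rather than a maximizer or saddle. I would also need to be careful that the $\theta$-update optimizes the \emph{same} objective $\mathcal{J}$ (with $\mathbf{S},\lambda$ held fixed) rather than merely the FGW distance with fixed $\mathbf{S}$; since the cost matrices depend on $\theta$ through Eq.~\eqref{eq:cost}, I must confirm that the $\arg\min$ in Eq.~\eqref{eq:opt_sgd} is taken over the full unified objective so the decrease property transfers. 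These are the places where the argument could break if the blocks were not truly decoupled, and they are where I would spend the most care.
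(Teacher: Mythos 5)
Your overall skeleton matches the paper's proof exactly: per-block monotone decrease for the $\mathbf{S}$-, $\lambda$-, and $\theta$-updates, chained into Eq.~\eqref{eq:decrease}-style monotonicity, combined with a lower bound on $\mathcal{J}$ and the monotone convergence theorem. Your $\lambda$-step (quadratic in $\lambda$ with leading coefficient $\alpha\mathcal{K}_3\geq 0$, so the stationary point of Eq.~\eqref{eq:opt_lambda} is a global minimizer) and your $\theta$-step are the same as the paper's. The one genuine divergence, and the place where your argument has a gap, is the $\mathbf{S}$-step. Your feasibility argument --- plug $\mathbf{S}^{(t)}$ into the subproblem, use $\mathrm{Div}(\mathbf{S}^{(t)}\|\mathbf{S}^{(t)})=0$ and non-negativity of the Bregman term --- is valid only if $\mathbf{S}^{(t+1)}$ exactly minimizes $\mathcal{J}(\mathbf{S};\lambda^{(k)},\theta^{(k)})+\gamma_p\,\mathrm{Div}(\mathbf{S}\|\mathbf{S}^{(t)})$ as Eq.~\eqref{eq:ot_cpot} nominally states. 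But the subproblem actually solved is Eq.~\eqref{eq:ot_cpot_final}, in which the quadratic GW term has been \emph{linearized} around $\mathbf{S}^{(t)}$ (that is what $\mathbf{L}_{\text{gw}}^{(t)}$, computed from the fixed $\sn^{(t)}$, is). So the Sinkhorn step minimizes a convex surrogate of $\mathcal{J}$, not $\mathcal{J}$ itself plus the proximal term, and your inequality only certifies descent of the surrogate. The paper closes exactly this hole by citing the convergence analysis of the proximal point method for GW in the GWL paper and framing the iterations as successive upper-bound minimization (Razaviyayn et al.), where the Bregman term compensates the linearization error so that the descent property of the true objective is guaranteed. To repair your proof you would either need to invoke that majorization argument or assume each proximal subproblem is solved exactly in the form of Eq.~\eqref{eq:ot_cpot}.

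Two smaller points on the lower bound. First, the paper makes the boundedness of $\theta$ an explicit assumption; your claim that $\mathbf{M},\mathbf{C}_1,\mathbf{C}_2$ ``take values in a bounded range by the exponential form'' of Eq.~\eqref{eq:cost} is not true unconditionally --- $e^{-\mathbf{E}_1\mathbf{E}_2^\T}$ is bounded above only if the embeddings (hence $\theta$) are bounded, so you need that assumption too. Second, since $\lambda$ ranges over all of $\mathbb{R}$, boundedness of the costs and the coupling polytope alone does not bound $\mathcal{J}$ below; you must additionally use the upward-quadratic structure in $\lambda$ (which you already established for the $\lambda$-step) to argue $\mathcal{J}(\mathbf{S},\lambda,\theta)\geq\mathcal{J}(\mathbf{S},\lambda^*,\theta)$ with $\lambda^*$ the closed-form minimizer, and then take the infimum over the compact set in $(\mathbf{S},\theta)$ --- this is precisely how the paper's proof is organized. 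With the SUM/majorization citation for the $\mathbf{S}$-step and these two fixes, your proof coincides with the paper's.
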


\begin{proposition}\label{prop:complexity}
{\normalfont (\textsc{Complexity of \algname})} The overall time complexity of \algname\ is $\mathcal{O}\left(KTmn+KTNn^2\right)$ at the training phase and $\mathcal{O}\left(Tmn+TNn^2\right)$ at the inference phase, where $K, T, N$ denote the number of iterations for alternating optimization, proximal point iteration, and Sinkhorn algorithm, respectively.
\end{proposition}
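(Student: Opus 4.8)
The plan is to bound the cost of a single round of the alternating optimization scheme of Eqs.~\eqref{eq:ot_cpot}--\eqref{eq:opt_sgd} and then multiply by the number of outer iterations $K$. One round comprises three subroutines: the OT update for $\mathbf{S}$, the closed-form update for $\lambda$, and the SGD update for $\theta$. I will show that the OT update dominates at $\mathcal{O}(Tmn + TNn^2)$ per round, while the $\lambda$ and $\theta$ updates are each $\mathcal{O}(mn + n^2)$ and hence absorbed. Summing over $K$ rounds then gives the training bound, and the inference bound follows by deleting the outer loop and keeping only a single forward pass plus one proximal-point solve.

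The core of the argument is the OT update, which runs $T$ proximal-point iterations, each solving the entropy-regularized problem of Eq.~\eqref{eq:ot_cpot_final} with $N$ Sinkhorn iterations. I must account for two costs per proximal-point step. First, forming the GW gradient $\mathbf{L}_{\text{gw}}^{(t)}$: the key observation is that it admits the factored form $\mathbf{C}_1^2\sn^{(t)}\mathbf{1}_{n_2\times n_2}+\mathbf{1}_{n_1\times n_1}\sn^{(t)}\mathbf{C}_2^{2^\T}-2\mathbf{C}_1\sn^{(t)}\mathbf{C}_2^\T$, where each $\mathbf{C}_i$ is masked by the adjacency (cf.\ Eq.~\eqref{eq:cost}) and therefore carries only $\mathcal{O}(m)$ nonzeros, while the all-ones matrices are rank one. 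Fixing a right-to-left multiplication order, contracting the rank-one factors first, and using sparse-times-dense products, each term is evaluated in $\mathcal{O}(mn)$ instead of the naive $\mathcal{O}(n^3)$, so $\mathbf{L}_{\text{gw}}^{(t)}$ costs $\mathcal{O}(mn)$. Second, Sinkhorn performs $N$ rounds of matrix-vector products against the $n_1\times n_2$ kernel, each $\mathcal{O}(n^2)$, totaling $\mathcal{O}(Nn^2)$. Adding these and multiplying by $T$ gives $\mathcal{O}(Tmn + TNn^2)$.

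It remains to verify that the other two updates do not dominate. For $\lambda$, the closed form of Eq.~\eqref{eq:opt_lambda} requires the scalars $\mathcal{K}_1,\mathcal{K}_2,\mathcal{K}_3$; here $\mathcal{K}_1$ is a sum over all node pairs at $\mathcal{O}(n^2)$, while $\mathcal{K}_2,\mathcal{K}_3$ are the same factored GW-type contractions analyzed above and reduce to $\mathcal{O}(mn)$. For $\theta$, one forward/backward pass of the fixed-depth MLP is linear in $n$, recomputing $\mathbf{M},\mathbf{C}_1,\mathbf{C}_2$ from the embeddings costs $\mathcal{O}(n^2)$ with constant hidden width, and evaluating the ranking loss reuses the $\mathcal{O}(mn)$ contraction. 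Both are thus $\mathcal{O}(mn + n^2)$ and dominated. Summing the per-round cost over $K$ rounds yields the training complexity $\mathcal{O}(KTmn + KTNn^2)$; at inference only a single forward pass followed by the proximal-point solve runs, giving $\mathcal{O}(Tmn + TNn^2)$.

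The main obstacle is the GW-gradient accounting: read directly, the quadruple sum over $x,x',y,y'$ in Eqs.~\eqref{eq:object} and~\eqref{eq:opt_lambda} suggests $\mathcal{O}(n^4)$ work, and even a generic dense factorization suggests $\mathcal{O}(n^3)$. The crux is to exploit simultaneously the tensor-factorized form of $\mathbf{L}_{\text{gw}}^{(t)}$, the $\mathcal{O}(m)$ sparsity of $\mathbf{C}_i$, and the rank-one structure of the all-ones matrices, and to commit to a multiplication order in which no dense $n\times n$ intermediate is ever multiplied by another dense operand; making this precise is what collapses the apparent $\mathcal{O}(n^3)$ to the claimed $\mathcal{O}(mn)$.
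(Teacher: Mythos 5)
Your proposal is correct, and its skeleton matches the paper's proof exactly: account for the three updates per round (OT mapping, $\lambda$, encoder), show the OT solve dominates at $\mathcal{O}(Tmn+TNn^2)$, multiply by $K$, and obtain inference by dropping the outer loop. The difference is in where the work lives: the paper delegates both of the substantive bounds --- the $\mathcal{O}(Tmn+TNn^2)$ cost of the proximal-point/Sinkhorn solve and the $\mathcal{O}(mn)$ cost of the $\lambda$ update --- to citations of prior work~\cite{parrot}, whereas you derive them from first principles via the factored form of $\mathbf{L}_{\text{gw}}^{(t)}$ in Eq.~\eqref{eq:ot_cpot_final}, the $\mathcal{O}(m)$ sparsity of the adjacency-masked $\mathbf{C}_i$, the rank-one structure of the all-ones factors, and a multiplication order that never multiplies two dense $n\times n$ operands; this is precisely the mechanism underlying the cited bounds, so your version is self-contained where the paper's is not, at the cost of length. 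Two minor bookkeeping differences, neither a gap: you omit the one-time RWR encoding of Eq.~\eqref{eq:rwr}, which the paper counts at $\mathcal{O}(mn)$ but which is absorbed by the $\mathcal{O}(Tmn)$ term anyway; and you treat the embedding dimensions as constants up front, while the paper carries the MLP cost as $\mathcal{O}(n|\mathcal{L}|d_1 + n|\mathcal{L}|d_2)$ and discards it only at the end using $n\gg|\mathcal{L}|$ --- the same conclusion by a slightly more explicit route.
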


All the proofs can be found in Appendix~\ref{app:proof}.
In general, the alternating optimization scheme generates a series of non-increasing objective functions with a bounded minimum hence achieving guaranteed convergence.
In addition, as we can see, \algname\ achieves fast inference with linear complexity w.r.t the number of edges and quadratic complexity w.r.t. the number of nodes.
\vspace{-5pt}
\section{Experiments}\label{sec:exp}

In this section, we carry out extensive experiments and analyses to evaluate the proposed \algname\ from the following aspects:
\begin{itemize}
    \item \textbf{Q1.} How effective is the proposed \algname?
    \item \textbf{Q2.} How efficient and scalable is the proposed \algname?
    \item \textbf{Q3.} How robust is \algname\ against graph noises?
    \item \textbf{Q4.} How do OT and embedding learning benefit each other?
    \item \textbf{Q5.} To what extent does the OT-based sampling strategy surpass the hand-crafted strategies?
\end{itemize}

\vspace{-3pt}
\subsection{Experiment Setup}
\noindent\textbf{Datasets.} Our method is evaluated on both plain and attributed networks summarized in Table~\ref{tab:datasets}. Detailed descriptions and experimental settings are included in Appendix~\ref{app:exp}\footnote{Code and datasets are available at \href{https://github.com/yq-leo/JOENA-WWW25}{github.com/yq-leo/JOENA-WWW25}.}.

\noindent\textbf{Baselines.} \algname\ is compared with the following three groups of methods, including (1) Consistency-based methods: IsoRank~\cite{isorank} and FINAL~\cite{final}, (2) Embedding-based methods: REGAL~\cite{regal}, DANA~\cite{dana}, NetTrans~\cite{nettrans}, BRIGHT~\cite{bright}, NeXtAlign~\cite{nextalign}, and WL-Align~\cite{wlalign}, and (3) OT-based methods: WAlign~\cite{walign}, GOAT~\cite{goat}, PARROT~\cite{parrot}, and SLOTAlign~\cite{slotalign}. To ensure a fair and consistent comparison, for all unsupervised baselines, we introduce the supervision information in the same way as \algname\ by concatenating the RWR scores w.r.t the anchor nodes with the node input features.

\noindent\textbf{Metrics.} We adopt two commonly used metrics Hits@$K$ and Mean Reciprocal Rank (MRR) to evaluate model performance. Specifically, given $(x, y)\in\mathcal{S}_{\text{test}}$ where $\mathcal{S}_{\text{test}}$ denotes the set of testing node pairs, if node $y\in \mathcal{G}_2$ is among the top-$K$ most similar nodes to node $u\in\mathcal{G}_1$, we consider it as a hit. Then, Hits@$K$ is computed by Hits@$K$=$\frac{\text{\# of hits}}{|\mathcal{S}_{\text{test}}|}$. MRR is computed by the average of the reciprocal of alignment ranking of all testing node pair, i.e., MRR = $\frac{1}{|\mathcal{S}_{\text{test}}|}\sum_{(x, y)\in\mathcal{S}_{\text{test}}}\frac{1}{\text{rank}(x, y)}$. 

\begin{table}[t]
    \centering
    \setlength\tabcolsep{1pt}
    \caption{Performance on plain network alignment.}
    \vspace{-5pt}
    \resizebox{\linewidth}{!}{
    \begin{tabular}{lccccccccccc}
        \toprule
        Dataset & \multicolumn{3}{c}{Foursquare-Twitter} & \multicolumn{3}{c}{ACM-DBLP} & \multicolumn{3}{c}{Phone-Email} \\
        \cmidrule(lr){1-1} \cmidrule(lr){2-4} \cmidrule(lr){5-7} \cmidrule(lr){8-10}
        Metrics & \multicolumn{1}{c}{Hits@1} & \multicolumn{1}{c}{Hits@10} & \multicolumn{1}{c}{MRR} & \multicolumn{1}{c}{Hits@1} & \multicolumn{1}{c}{Hits@10} & \multicolumn{1}{c}{MRR} & \multicolumn{1}{c}{Hits@1} & \multicolumn{1}{c}{Hits@10} & \multicolumn{1}{c}{MRR}\\
        \midrule
        \textsc{IsoRank} & 0.023 & 0.133 & 0.060 & 0.157 & 0.629 & 0.297 & 0.028 & 0.189 & 0.087 \\
        \textsc{FINAL} & 0.040 & 0.236 & 0.100 & 0.196 & 0.692 & 0.354 & 0.031 & 0.215 & 0.099 \\
        \midrule
       \textsc{DANA} & 0.042 & 0.160 & 0.082 & 0.343 & 0.559 & 0.316 & 0.033 & 0.206 & 0.095 \\
        \textsc{NetTrans} & 0.086 & 0.270 & 0.145 & 0.410 & 0.801 & 0.540 & 0.065 & 0.119 & 0.155 \\
        \textsc{BRIGHT} & 0.091 & 0.268 & 0.149 & 0.394 & 0.809 & 0.534 & 0.043 & 0.255 & 0.113 \\
        \textsc{NeXtAlign} & 0.101 & 0.279 & 0.158 & 0.459 & 0.861 & 0.594 & 0.063 & 0.424 & 0.195 \\
        \textsc{WL-Align} & \underline{0.253} & 0.343 & 0.285 & 0.542 & 0.781 & 0.629 & 0.121 & 0.409 & 0.214 \\
        \midrule
        \textsc{WAlign} & 0.077 & 0.258 & 0.135 & 0.342 & 0.794 & 0.481 & 0.046 & 0.308 & 0.131 \\
        \textsc{PARROT} & 0.245 & \underline{0.409} & \underline{0.304} & \underline{0.619} & \underline{0.912} & \underline{0.719} & \underline{0.323} & \underline{0.749} & \underline{0.469} \\ 
        \midrule
        \algname\ & \textbf{0.403} & \textbf{0.576} & \textbf{0.464} & \textbf{0.635} & \textbf{0.933} & \textbf{0.736} & \textbf{0.384} & \textbf{0.809} & \textbf{0.527} \\
        \bottomrule
    \end{tabular}
    }
    \label{tab:exp_effect_plain}
    \vspace{-10pt}
\end{table}

\begin{table}[t]
    \centering
    \setlength\tabcolsep{1pt}
    \caption{Performance on attributed network alignment.}
    \vspace{-5pt}
    \resizebox{\linewidth}{!}{
    \begin{tabular}{ccccccccccc}
        \toprule
        Dataset & \multicolumn{3}{c}{Cora1-Cora2} & \multicolumn{3}{c}{ACM(A)-DBLP(A)} & \multicolumn{3}{c}{Douban}\\
        \cmidrule(lr){1-1} \cmidrule(lr){2-4} \cmidrule(lr){5-7} \cmidrule(lr){8-10}
        Metrics & \multicolumn{1}{c}{Hits@1} & \multicolumn{1}{c}{Hits@10} & \multicolumn{1}{c}{MRR} & \multicolumn{1}{c}{Hits@1} & \multicolumn{1}{c}{Hits@10} & \multicolumn{1}{c}{MRR} & \multicolumn{1}{c}{Hits@1} & \multicolumn{1}{c}{Hits@10} & \multicolumn{1}{c}{MRR}\\
        \midrule
        \textsc{FINAL}    & 0.710 & 0.881 & 0.773 & 0.398 & 0.833 & 0.542 & 0.468 & 0.914 & 0.625\\
        \midrule
        \textsc{REGAL} & 0.511 & 0.591 & 0.542 & 0.501 & 0.725 & 0.579 & 0.099 & 0.274 & 0.153 \\
        \textsc{NetTrans} & 0.989 & 0.999 & 0.993 & 0.692 & 0.938 & 0.779 & 0.210 & 0.213 & 0.332\\
        \textsc{BRIGHT}    & 0.839 & 0.992 & 0.905 & 0.470 & 0.857 & 0.603 & 0.282 & 0.641 & 0.397\\
        \textsc{NeXtAlign} & 0.439 & 0.703 & 0.538 & 0.486 & 0.867 & 0.615 & 0.245 & 0.655 & 0.385\\
        \midrule
        \textsc{WAlign} & 0.824 & 0.997 & 0.901 & 0.377 & 0.779 & 0.501 & 0.236 & 0.533 & 0.341 \\
        PARROT    & \underline{0.996} & \textbf{1.000} & \underline{0.998} & \underline{0.721} & \underline{0.960} & \underline{0.806} & \underline{0.696} & \underline{0.981} & \underline{0.789}\\
        \textsc{SLOTAlign} & 0.985 & 0.997 & 0.990 & 0.663 & 0.879 & 0.740 & 0.486 & 0.762 & 0.582 & \\ 
        \midrule
        \algname\ & \textbf{0.999} & \textbf{1.000} & \textbf{0.999} & \textbf{0.767} & \textbf{0.967} & \textbf{0.839} & \textbf{0.761} & \textbf{0.986} & \textbf{0.851}\\
        \bottomrule
    \end{tabular}
    }
    \label{tab:exp_effect_attr}
    \vspace{-10pt}
\end{table}

\vspace{-3pt}
\subsection{Effectiveness Results}

We evaluate the alignment performance of \algname, and the results on plain and attributed networks are summarized in Table~\ref{tab:exp_effect_plain} and~\ref{tab:exp_effect_attr}, respectively.
Compared with consistency and embedding-based methods, \algname\ achieves up to 31\% and 22\% improvement in MRR over the best-performing baseline on plain and attributed network tasks, respectively, which indicates that \algname\ is capable of learning noise-reduced node mapping beyond local graph geometry and consistency principles thanks to the OT component.
Compared with OT-based methods, \algname\ achieves a significant outperformance compared with the best competitor PARROT~\cite{parrot} with up to 16\% and 6\% improvement in MRR on plain and attributed networks.
Such outperformance demonstrates the effectiveness of the transport costs encoded by learnable node embeddings. Moreover, the performance improvement over WAlign~\cite{walign} and SLOTAlign~\cite{slotalign} indicates that \algname\ successfully avoids embedding collapse thanks to the learnable transformation $g_\lambda$ on OT mapping and the resulting adaptive sampling strategy $\sn$.

\vspace{-3pt}
\subsection{Scalability Results}
\begin{figure}[ht]
  \vspace{-10pt}
  \includegraphics[width=0.8\linewidth, trim=0 0 0 50, clip]{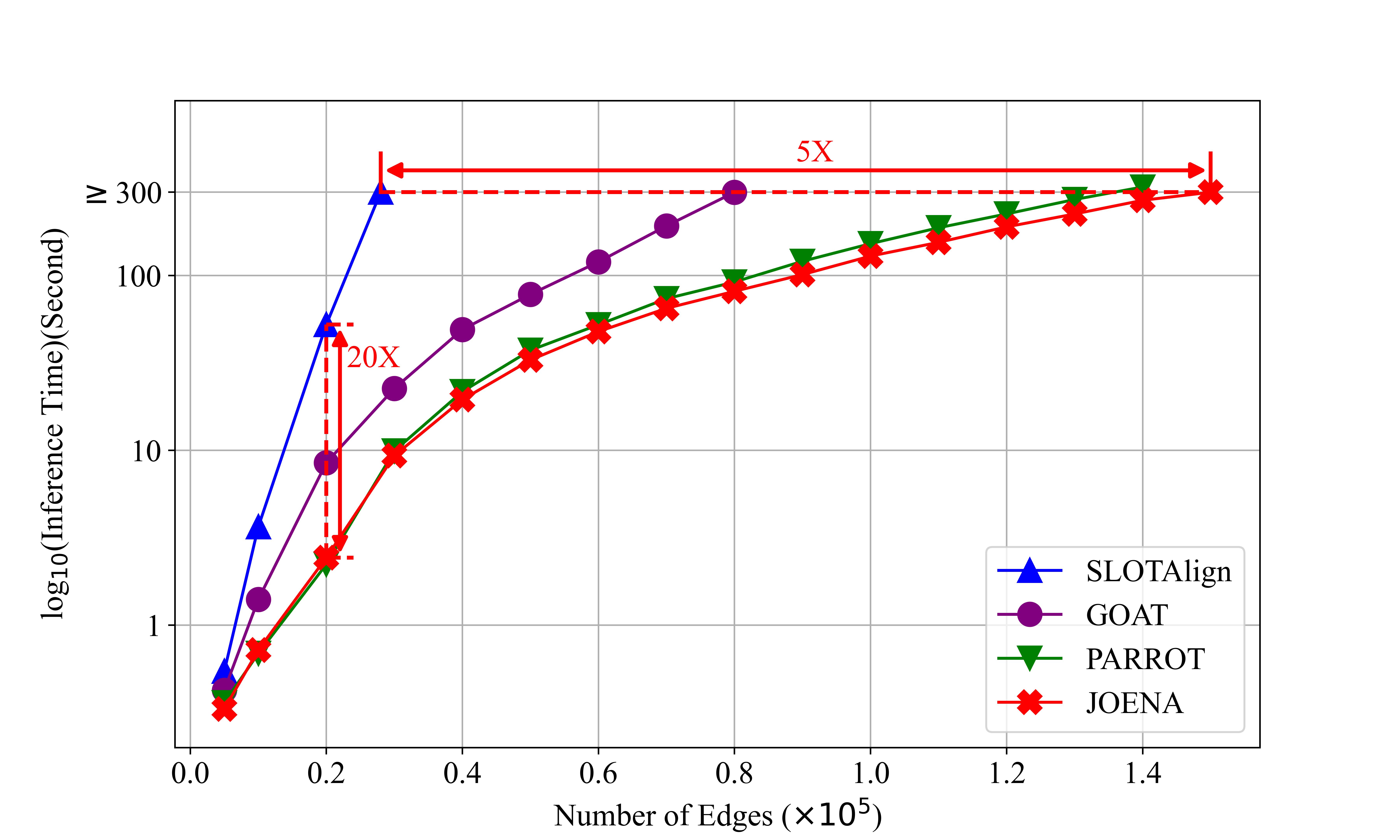}
  \vspace{-5pt}
  \caption{Scalability results. \algname\ achieves the best scalability results with up to 20$\times$ speed-up in inference time and up to 5$\times$ scale-up in network size.}
  \label{fig:exp_runtime}
  \vspace{-10pt}
\end{figure}
We compare the scalability of the propose \algname\ with that of OT-based methods, including GOAT~\cite{goat}, PARROT~\cite{parrot}, and SLOTAlign~\cite{slotalign}. 
We record the inference time as the number of edges increases, and the results are shown in Figure~\ref{fig:exp_runtime}. For networks with 20,000 edges, \algname\ runs 20 times faster than SLOTAlign. Under 300-second running time limit, \algname\ can process networks 5 times the size of SLOTAlign. Besides, we observe that \algname\ runs slightly faster than the pure OT-based method PARROT.
For one thing, we attribute such slight improvement to the lightweight MLP for embedding learning, as PARROT requires hand-crafted embeddings that may be computationally-heavy.
For another, better cost design based on learnable embeddings also benefits the converegence of OT optimization, hence achieving faster computation.

\subsection{Robustness Results}
To show the robustness of the proposed \algname\ , we evaluate the performance of \algname\ under \textit{structural} and \textit{attribute} noise.

\subsubsection{Robustness against Structural Noises}
We first evaluate the robustness of \algname\ against structural (edge) noises. Specifically, for edge noise level $p$, we randomly flip $p\%$ entries in the adjacency matrix, i.e., randomly add/delete edges~\cite{slotalign}. Evaluations are conducted on plain Phone-Email network to eliminate potential interference from node attributes. The results are shown in Figure~\ref{fig:exp_noise}a.

Compared to other baselines, the performance of \algname\ consistently achieves the highest MRR in all cases.
More importantly, thanks to the direct modeling and noise-reduced property of OT, we observe a much slower degradation of the MRR when the noise level increases, validating the robustness of \algname\ against graph structural noises.
Furthermore, embedding-based methods without OT (i.e., WLAlign~\cite{wlalign}, NeXtAlign~\cite{nextalign}, BRIGHT~\cite{bright}) degrades much faster than methods with OT (i.e., \algname, PARROT~\cite{parrot}), demonstrating that embedding-based methods are more sensitive to structural noise due to indirect modeling.

\subsubsection{Robustness against Attribute Noises}
\begin{figure}[t]
  \includegraphics[width=\linewidth]{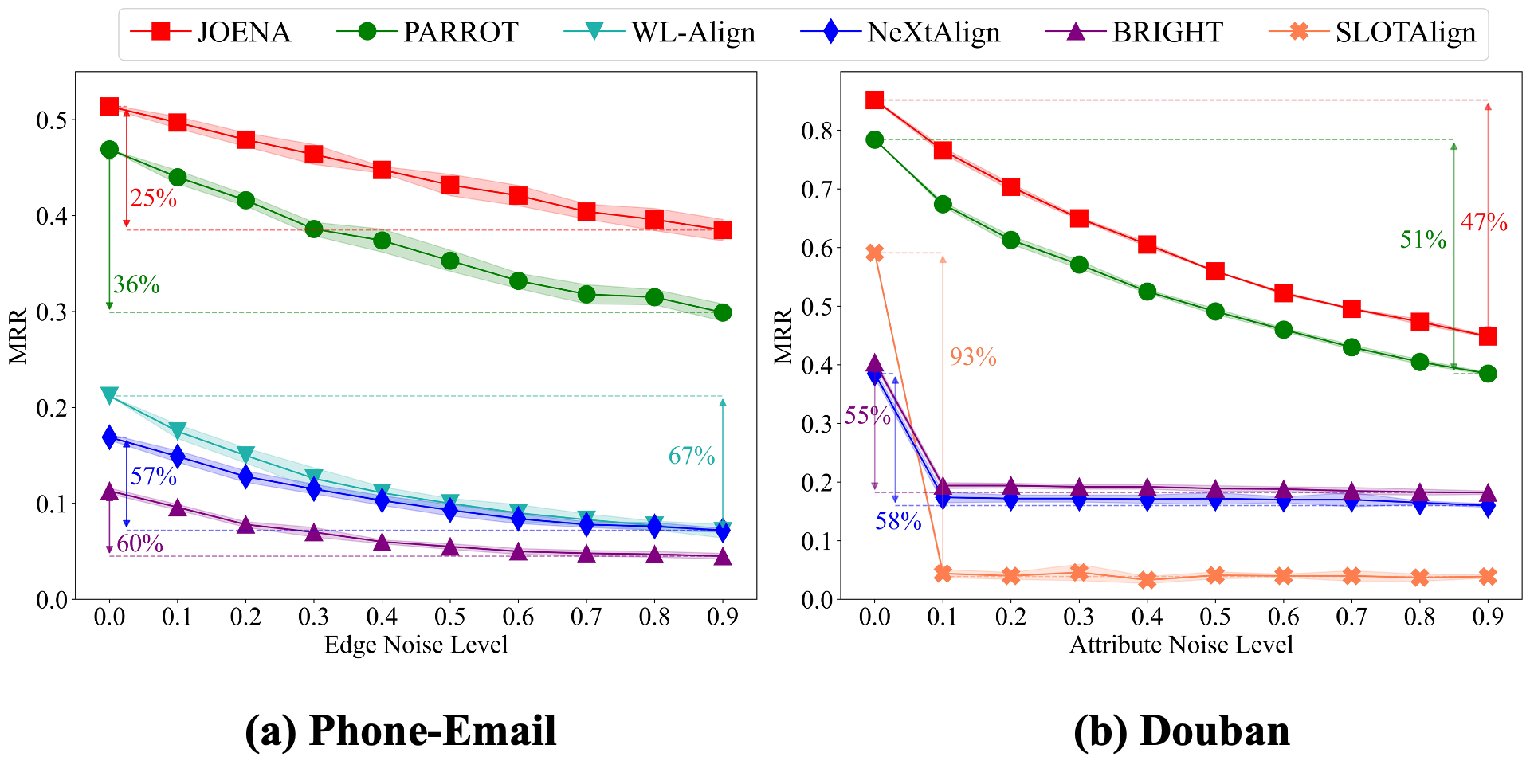}
  \vspace{-20pt}
  \caption{Performance comparison of five alignment methods under different levels of structure and attribute noise.}
  \vspace{-5pt}
  \label{fig:exp_noise}
\end{figure}

We also evaluate the robustness of \algname\ against attribute noises. Specifically, for attribute level $p$, we randomly flip $p\%$ entries in the attribute matrix~\cite{trung2020adaptive}. The results are shown in Figure~\ref{fig:exp_noise}b.

Compared to baselines, the performance of \algname\ consistently achieves the best performance, as well as the mildest degradation when attribute noise level increases, demonstrating the robustness of \algname\ against node attribute noises.
Besides, the performance of embedding-based methods degrades more severely than \algname\ which further illuminates the deficiency of indirect modeling.

\subsection{Further Analysis}
\subsubsection{Mutual Benefits of OT and Embedding Learning}

To verify the mutual benefits of OT and embedding learning, we compare the performance of \algname\ against the following variants on three datasets: (1) \textsc{Emd} infers node alignments by node embeddings learned under the sampling strategy from BRIGHT~\cite{bright}; (2) \textsc{Emd(OT)} infers node alignments by node embeddings learned under our OT-based sampling strategy; (3) \text{OT} infers node alignments by the OT mapping with cost matrices based on RWR encoding; (4) \algname\ infers node alignments by OT mapping with learnable cost matrices; (5) \textsc{OT$\odot$Emb} infers node alignments by the Hadamard product of OT mapping and the inner product of node embeddings; (6) \textsc{OT+Emb} infers node alignments by the sum of OT mapping and the inner product of node embeddings.

\begin{table}[t]
    \centering
    \setlength\tabcolsep{1pt}
    \caption{Mutual benefits of embedding and OT learning}
    \vspace{-5pt}
    \resizebox{\linewidth}{!}{
    \begin{tabular}{lccccccccc}
        \toprule
        Dataset & \multicolumn{3}{c}{Foursquare-Twitter} & \multicolumn{3}{c}{ACM-DBLP} & \multicolumn{3}{c}{Phone-Email} \\
        \cmidrule(lr){1-1} \cmidrule(lr){2-4} \cmidrule(lr){5-7} \cmidrule(lr){8-10}
        Metrics & \multicolumn{1}{c}{Hits@1} & \multicolumn{1}{c}{Hits@10} & \multicolumn{1}{c}{MRR} & \multicolumn{1}{c}{Hits@1} & \multicolumn{1}{c}{Hits@10} & \multicolumn{1}{c}{MRR} & \multicolumn{1}{c}{Hits@1} & \multicolumn{1}{c}{Hits@10} & \multicolumn{1}{c}{MRR}\\
        \midrule
        \textsc{Emb} & 0.079 & 0.244 & 0.134 & 0.401 & 0.798 & 0.534 & 0.063 & 0.358 & 0.164 \\
        \textsc{Emb(OT)} & \textbf{0.090} & \textbf{0.255} & \textbf{0.140} & \textbf{0.406} & \textbf{0.807} & \textbf{0.538} & \textbf{0.078} & \textbf{0.373} & \textbf{0.173}\\
        \midrule
        \textsc{OT} & 0.243 & 0.407 & 0.298 & 0.600 & 0.916 & 0.707 & 0.224 & 0.581 & 0.343\\
        \textsc{\algname} & \textbf{0.403} & \textbf{0.576} & \textbf{0.464} & \textbf{0.635} & \textbf{0.933} & \textbf{0.736} & \textbf{0.384} & \textbf{0.809} & \textbf{0.527} \\
        \midrule
        \textsc{OT $\odot$ Emb} & 0.243 & 0.407 & 0.297 & 0.601 & 0.916 & 0.707 & 0.224 & 0.593 & 0.337\\
        \textsc{OT $+$ Emb} & 0.244 & 0.408 & 0.299 & 0.600 & 0.917 & 0.707 & 0.226 & 0.583 & 0.345\\
        \bottomrule
    \end{tabular}
    }
    \label{tab:exp_mutual}
    \vspace{-15pt}
\end{table}

The results are shown in Table~\ref{tab:exp_mutual}. Firstly, we observe a consistent outperformance of \textsc{Emb(OT)} compared to $\textsc{Emb}$, showing that the proposed OT-based sampling strategy improves the quality of node embeddings compared to existing sampling strategies. Besides, comparing \textsc{OT} to \algname, without learnable cost matrices, \textsc{OT} drops up to 16\% in Hits@1 compared to \algname, indicating that the cost design on learnable node embeddings improves the performance of OT optimization by a significant margin. Furthermore, we compare the performance of \algname\ to \textsc{OT$\odot$Emb} and \textsc{OT+Emb}, both of which naively integrate the OT and embedding alignments learned separately. It is shown that both \textsc{OT$\odot$Emb} and \textsc{OT+Emb} achieves similar performance as $\text{OT}$ and outperforms \textsc{Emb}. For one thing, this suggests that the outperformance of \algname\ largely attributes to the OT alignment, which provides a more denoised alignment compared with embedding alignment.
For another, naively combining the alignment matrices of embedding or OT-based method at the final stage hardly improves the alignment quality, and it is necessary to combine both components during training.

\begin{figure}[t]
    \centering
    \begin{subfigure}[b]{0.48\linewidth}
        \centering
        \includegraphics[width=\textwidth, trim = 15 0 50 50, clip]{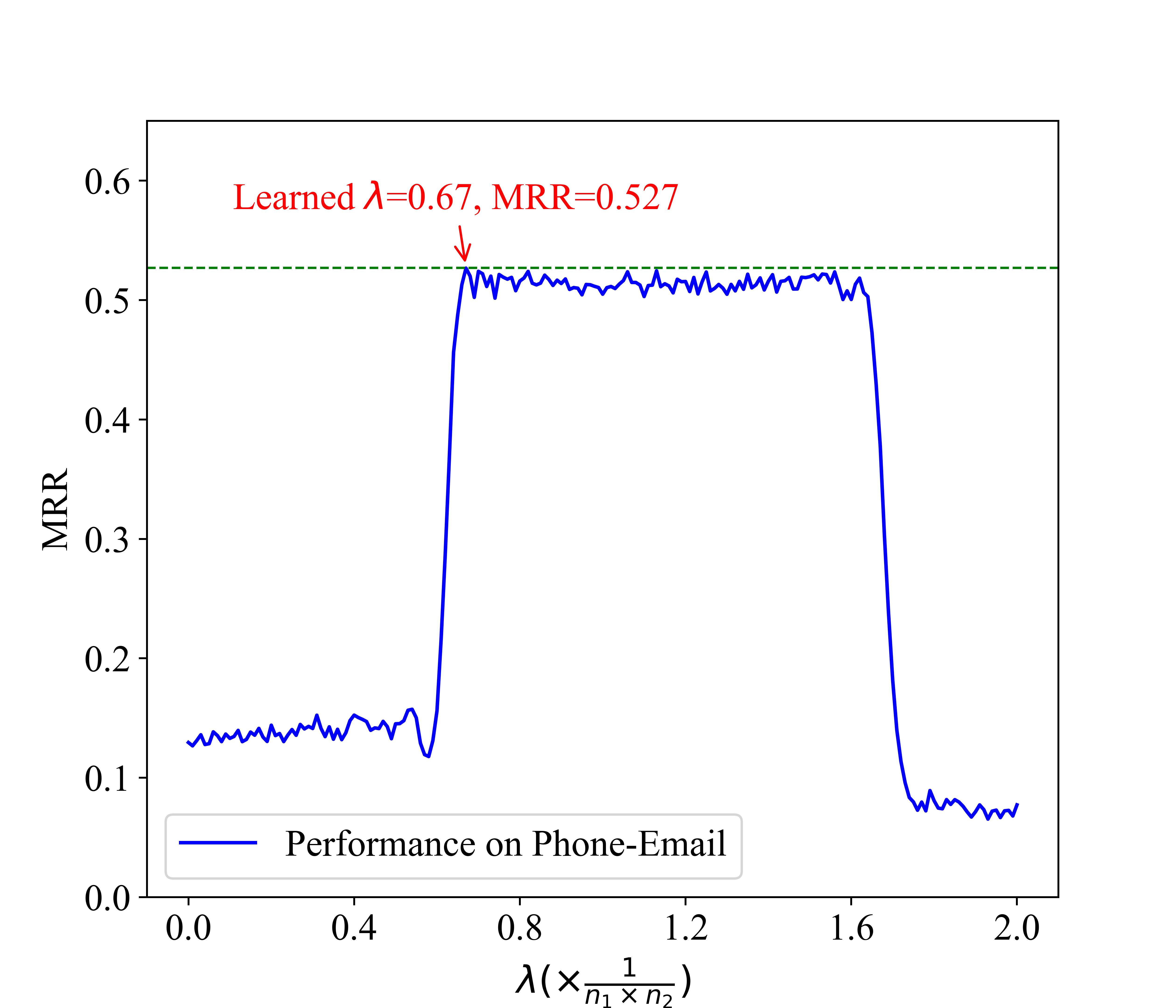}
        \caption{Phone-Email}
    \end{subfigure}
    \hfill
    \begin{subfigure}[b]{0.48\linewidth}
        \centering
        \includegraphics[width=\textwidth, trim = 15 0 50 50, clip]{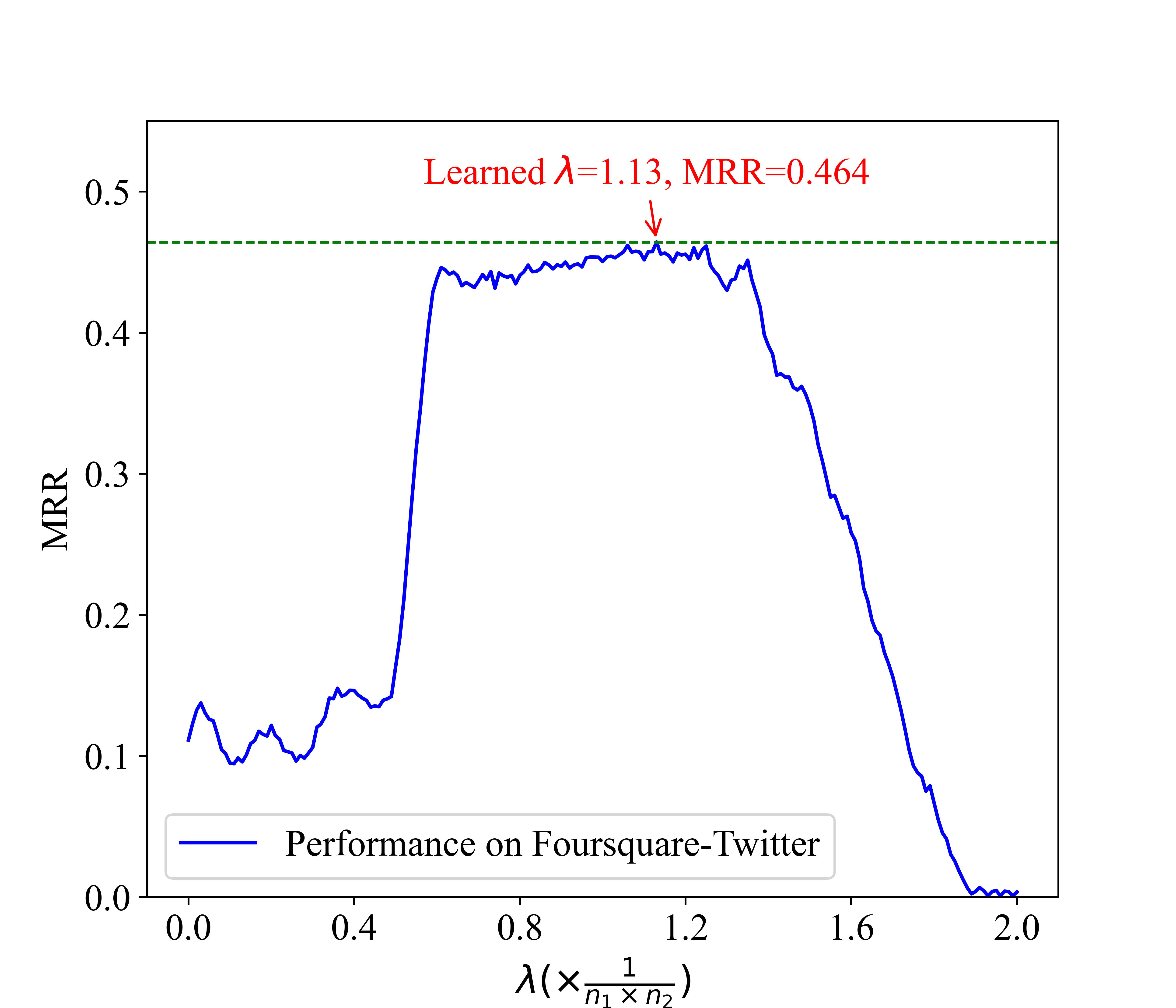}
        \caption{Foursquare-Twitter}
    \end{subfigure}
    \vspace{-5pt}
    \caption{MRR with different $\lambda$. Our learned $\lambda$ consistently achieves the best MRR on both datasets.}
    \label{fig:exp_lambda}
    \vspace{-15pt}
\end{figure}

\subsubsection{OT-based Sampling Strategy}
We also carry out studies on the effectiveness of the OT-based sampling strategy $g_\lambda(\mathbf{S})$.
As shown in Figure~\ref{fig:exp_lambda}, we report the MRR under different $\lambda$ with the learned $\lambda$ annotated.
It is shown that \algname\ achieves the best performance under the learned $\lambda$.
Besides, we observe a significant performance drop when $\lambda$ is not properly selected.
This is due to the fact that when $\lambda$ is too small/large, most pairs will be sampled as positive/negative pairs exclusively, which further leads to embedding collapse.
To validate this point, we visualize how the embedding space changes along optimization.
As shown in Figure~\ref{subfig:exp_emb_vis_0}, when setting $\lambda=0$, MRR gradually decreases and the learned embeddings collapse into an identical point along optimization.
On the contrary, as shown in Figure~\ref{subfig:exp_emb_vis_opt}, \algname\ is able to learn the optimal $\lambda$, under which, MRR gradually increases and node embeddings are well separated in the embedding space.

\vspace{-5pt}
\section{Related Works}\label{sec:related}
\vspace{-3pt}
\subsection{Network Alignment}
Traditional network alignment methods are often built upon alignment consistency principles.
IsoRank~\cite{isorank} conducts random walk on the product graph to achieve topological consistency. FINAL~\cite{final} interprets IsoRank as an optimization problem and introduces consistency at attribute level to handle attributed network alignment.
Another line of works \cite{li2022unsupervised,wang2023networked,yan2022dissecting} learn informative node embeddings in a unified space to infer alignment.
REGAL~\cite{regal} conducts matrix factorization on cross-network similarity matrix for node embedding learning.
DANA~\cite{dana} learns domain-invariant embeddings for network alignment via adversarial learning.
BRIGHT~\cite{bright} bridges the consistency and embedding-based alignment methods, and NeXtAlign~\cite{nextalign} further balances between the alignment consistency and disparity by crafting the sampling strategies.
WL-Align~\cite{wlalign} utilizes cross-network Weisfeiler-Lehman relabeling to learn proximity-preserving embeddings. 
More related works on network alignment are reviewed in~\cite{du2021new}.

\vspace{-3pt}
\subsection{Optimal Transport on Graphs}
OT has recently gained increasing attention in graph mining and network alignment, whose effectiveness often depends on the pre-defined cost function restricted to specific graphs.
For example, \cite{got, walign, fgot,yan2024trainable} represent graphs as distributions of filtered graph signals, focusing on one specific graph property, while other cost designs are mostly based on node attributes~\cite{got2} or graph structures~\cite{goat}. PARROT~\cite{parrot} integrates various graph properties and consistency principles via a linear combination, but requires arduous parameter tuning.
More recent works combine both embedding and OT-based alignment methods.
GOT~\cite{got2} adopts a deep model to encode transport cost. GWL~\cite{gwl} learns graph matching and node embeddings jointly in a GW learning framework. SLOTAlign~\cite{slotalign} utilizes a parameter-free GNN model to encode the GW distance between two graph distributions.
CombAlign~\cite{combalign} further proposes to combine the embeddings and OT-based alignment via an ensemble framework.

\vspace{-3pt}
\subsection{Graph Representation Learning}
Representation learning gained increasing attention in analyzing complex systems with applications in trustworthy ML~\cite{liu2024aim,yoo2024ensuring,fu2023privacy,bao2024adarc}, drug discovery \cite{wei2022impact,wu2023risk,zhang2024clinical,sui2024cancer} and recommender systems~\cite{liu2024collaborative,zeng2024interformer,wei2024towards}.
Early approaches \cite{perozzi2014deepwalk,grover2016node2vec} utilize random walks and process graphs as sequences by a skip-gram model. \cite{Hamilton2017Inductive,zeng2019graphsaint} sample fixed-size neighbors for better scalability to large graphs. More recent studies~\cite{huang2018adaptive,yan2024reconciling} focus on adaptive and unified sampling strategies that benefit various graphs.
Based on these strategies, graph contrastive learning\cite{velivckovic2018deep,jing2022coin,jing2024sterling,zheng2024pyg,Sun2020InfoGraph} learns node embeddings by pulling similar nodes together while pushing dissimilar ones apart.

\begin{figure}[t]
    \centering
    \begin{subfigure}[b]{0.5\textwidth}
        \centering
        \includegraphics[trim=10 10 10 10, clip, width=\textwidth]{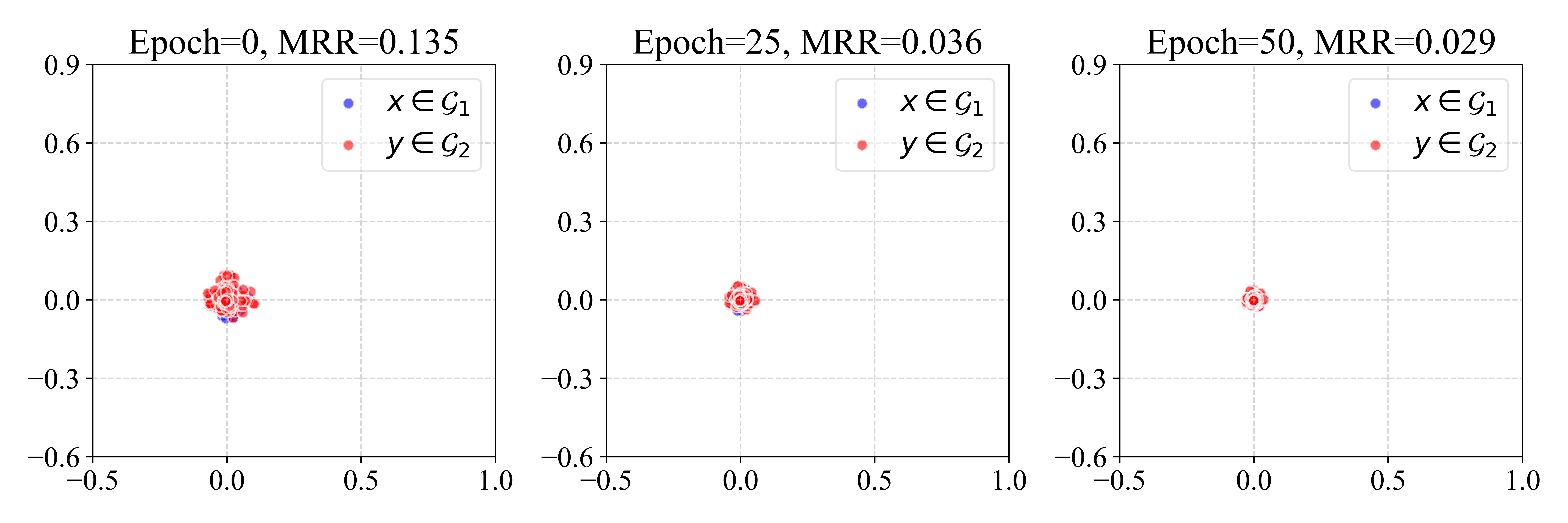}
        \vspace{-10pt}
        \caption{Node embeddings learned via original FGW objective, i.e., $\lambda = 0$.}
        \label{subfig:exp_emb_vis_0}
    \end{subfigure}
    \hfill
    \begin{subfigure}[b]{0.5\textwidth}
        \centering
        \includegraphics[trim=10 10 10 0, clip, width=\textwidth]{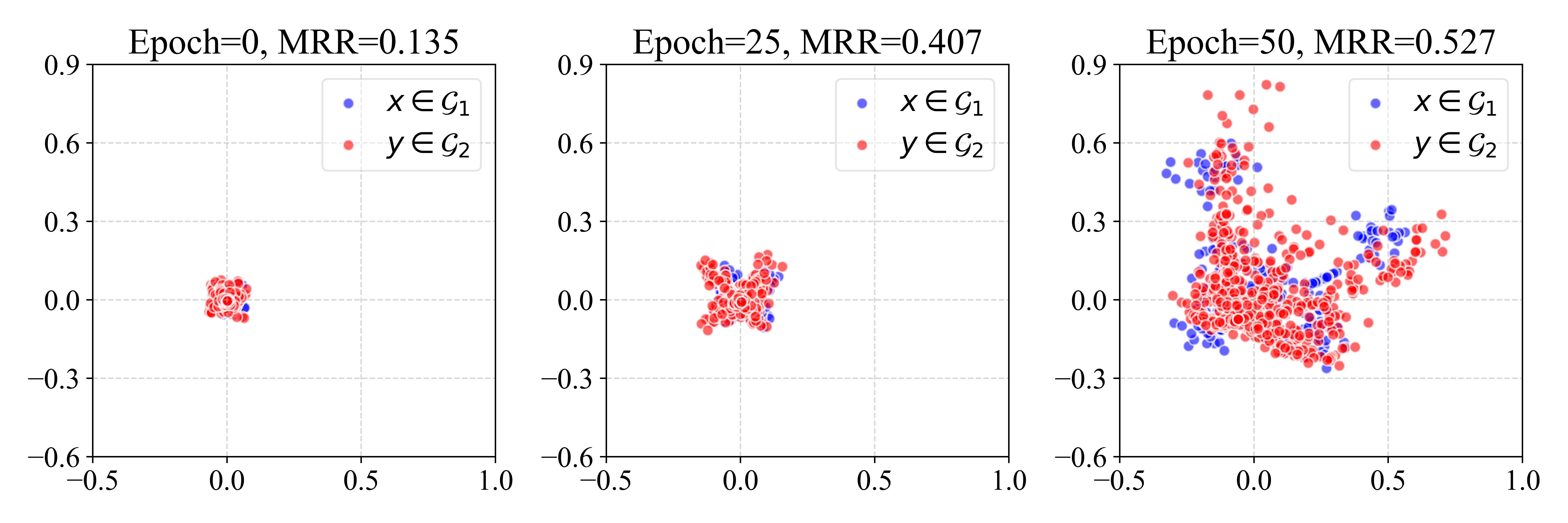}
        \vspace{-10pt}
        \caption{Node embeddings learned via FGW objective with optimal $\lambda$.}
        \label{subfig:exp_emb_vis_opt}
    \end{subfigure}
    \vspace{-15pt}
    \caption{Evolution of node embeddings along training: (a) directly applying FGW distance for embedding learning leads to embedding collapse and MRR degradation; (b) utilizing FGW distance with transformed $\sn$ leads to discriminating embeddings and MRR improvement.}
    \label{fig:exp_emb_vis}
    \vspace{-15pt}
\end{figure}

\vspace{-5pt}
\section{Conclusions}\label{sec:con}
In this paper, we study the semi-supervised network alignment problem by combining embedding and OT-based alignment methods in a mutually beneficial manner. To improve embedding learning via OT, we propose a learnable transformation on OT mapping to obtain an adaptive sampling strategy directly modeling all cross-network node relationships. To improve OT optimization via embedding, we utilize the learned node embeddings to achieve more expressive OT cost design. We further show that the FGW distance can be neatly unified with a multi-level ranking loss at both node and edge levels. Based on these, a unified framework named \algname\ is proposed to learn node embeddings and OT mappings in a mutually beneficial manner. Extensive experiments show that \algname\ consistently outperforms the state-of-the-art in both effectiveness and scalability by a significant margin, achieving up to 16\% performance improvement and up to 20$\times$ speedup.

\section*{Acknowledgment}
This work is supported by NSF (2134081, 2316233, 2324769),
NIFA (2020-67021-32799),
and
AFOSR (FA9550-24-1-0002).
The content of the information in this document does not necessarily reflect the position or the policy of the Government, and no official endorsement should be inferred.  The U.S. Government is authorized to reproduce and distribute reprints for Government purposes notwithstanding any copyright notation here on.

\newpage
\bibliographystyle{ACM-Reference-Format}
\bibliography{main}

\appendix
\appendix

\vspace{-5pt}
\section{Algorithm}\label{app:algo}
We present the overall algorithm of \algname\ as Algorithm~\ref{algo:jeona}.
\begin{algorithm}[h]
    \caption{Joint OT and embedding learning (\algname)}
    \begin{algorithmic}[1]\label{algo:jeona}
        \REQUIRE (1) networks $\mathcal{G}_1=(\mathbf{A}_1,\mathbf{X}_1), \mathcal{G}_2=(\mathbf{A}_2,\mathbf{X}_2)$, (2) anchor node set $\mathcal{L}$, (3) parameters $\alpha,\beta,\gamma_p$
        \ENSURE the alignment matrix $\mathbf{S}$.
        \STATE Initialize $\bm{\mu}_1 = \frac{\mathbf{1}_{n_1}}{n_1}, \bm{\mu}_2 = \frac{\mathbf{1}_{n_2}}{n_2},\mathbf{S}^{(1)}=\bm{\mu}_1\bm{\mu}_2^\T, \lambda^{(1)}=\frac{1}{n_1\times n_2}$;
        \STATE Compute RWR embedding matrices $\mathbf{R}_1, \mathbf{R}_2$ by Eq.~\eqref{eq:rwr};
        \STATE Concatenate node attributes $\mathbf{X}_1=[\mathbf{R}_1||\mathbf{X}_1]$, $\mathbf{X}_2=[\mathbf{R}_2||\mathbf{X}_2]$
        \FOR{$k=1,...,K$}
            \STATE Update node embeddings $\mathbf{E}^{(k)}_1\!=\!f_\theta^{(k)}(\mathbf{X}_1)$, $\mathbf{E}^{(k)}_2\!=\!f_\theta^{(k)}(\mathbf{X}_2)$;
            \STATE Update cost matrices $\mathbf{M}^{(k)}, \mathbf{C}^{(k)}_1, \mathbf{C}^{(k)}_2$ by Eq.~\eqref{eq:cost};
            \STATE Update OT mapping $\mathbf{S}^{(k+1)}$ by proximal point method in Eq.~\eqref{eq:ot_cpot};
            \STATE Update transformation parameter $\lambda^{(k+1)}$ by Eq.~\eqref{eq:opt_lambda};
            \STATE Update $\theta^{(k+1)}$ by SGD in Eq.~\eqref{eq:opt_sgd};
        \ENDFOR
        \RETURN alignment matrix $\mathbf{S}^{(K+1)}$.
    \end{algorithmic}
\end{algorithm}

\vspace{-5pt}
\section{Proof}\label{app:proof}
\vspace{-2pt}
\subsection{Proof of Proposition~\ref{prop:collapse}}
\begin{proposition*}
{\normalfont\textsc{(Embedding Collapse).}}
Given two networks $\G_1,\G_2$, directly optimizing feature encoder $f_\theta$ with the FGW distance leads embedding collapse, that is $\mathbf{E}_1(x)=\mathbf{E}_2(y), \forall x\in\G_1,y\in\G_2$, where $\mathbf{E}_1=f_\theta(\G_1),\mathbf{E}_2=f_\theta(\G_2)$.
\end{proposition*}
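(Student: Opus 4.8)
The plan is to exploit two facts already in place: the FGW distance in Eq.~\eqref{eq:fgwd} is non-negative, and the cost matrices in Eq.~\eqref{eq:cost} are obtained from inner products through a strictly decreasing exponential. First I would observe that every summand of the FGW objective is a product of a non-negative cost (either $\mathbf{M}(x,y)=e^{-\mathbf{E}_1(x)\mathbf{E}_2(y)^{\T}}>0$, or $|\mathbf{C}_1(x,x')-\mathbf{C}_2(y,y')|^q\ge 0$) with a non-negative coupling entry $\mathbf{S}(x,y)\ge 0$. Hence $\textup{FGW}_{q,\alpha}(\G_1,\G_2)\ge 0$ for every admissible $\mathbf{S}$ and every $\theta$, so $0$ is a global lower bound of the objective that $f_\theta$ is being driven to minimize. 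The strategy is then to show that this lower bound is approached precisely by the collapsed configuration, so that directly minimizing the FGW distance over $\theta$ pushes $f_\theta$ toward collapse.

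Next I would substitute the collapsed configuration $\mathbf{E}_1(x)=\mathbf{E}_2(y)=\mathbf{v}$ for all $x\in\G_1,\,y\in\G_2$ into Eq.~\eqref{eq:cost}. Then $\mathbf{M}\equiv e^{-\|\mathbf{v}\|^2}\mathbf{1}$ and $\mathbf{C}_i=e^{-\|\mathbf{v}\|^2}\mathbf{A}_i$, so both the Wasserstein term and the Gromov--Wasserstein term factor out the common scalar $e^{-q\|\mathbf{v}\|^2}$. Letting $\|\mathbf{v}\|\to\infty$ sends this scalar, and therefore the whole objective, to $0$. Thus the collapsed encoder attains the global infimum of the FGW distance, which is the quantitative heart of the claim.

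For the converse direction I would argue that reaching the infimum in fact \emph{forces} collapse. The key point is that the two cost terms cannot both vanish at finite, non-degenerate embeddings: the structural term demands $\mathbf{C}_1(x,x')=\mathbf{C}_2(y,y')$ across the support of $\mathbf{S}$, but since $\mathbf{A}_1\ne\mathbf{A}_2$ in general, an edge present in one graph and absent in the other can only be reconciled by driving the corresponding $e^{-\mathbf{E}_i(a)\mathbf{E}_i(b)^{\T}}$ to $0$, i.e.\ by sending inner products to $+\infty$; simultaneously the Wasserstein term is minimized by maximizing every cross-network inner product $\mathbf{E}_1(x)\mathbf{E}_2(y)^{\T}$. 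Both pressures are satisfied together only when all embeddings align to a common direction with diverging norm, which is exactly $\mathbf{E}_1(x)=\mathbf{E}_2(y)$. Hence any minimizing trajectory of gradient descent on $\theta$ converges to the collapsed solution.

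The main obstacle I anticipate is that the infimum $0$ is \emph{not} attained at any finite $\theta$, because $e^{-\cdot}$ is strictly positive; the collapse is therefore a limiting, degenerate phenomenon rather than an exact minimizer. I would make the statement precise in terms of minimizing sequences (the embeddings collapse in direction while their norm diverges), and I would note that, since the envelope/proximal scheme optimizes $\theta$ under a fixed $\mathbf{S}$, the lower bound of $0$ holds for every fixed coupling as well, so the alternating optimization does not alter the collapse argument.
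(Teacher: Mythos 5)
Your proposal has a genuine gap in the necessity direction, which is the heart of the claim. The sufficiency half (substituting the collapsed configuration and letting $\|\mathbf{v}\|\to\infty$) is correct, but it only shows that collapse is \emph{one} way to approach the infimum; the proposition asserts that optimization \emph{leads to} collapse, i.e.\ that (near-)minimizers must be collapsed. Your converse argument — that the Wasserstein and GW ``pressures'' are satisfied together \emph{only} when all embeddings align to a common direction with diverging norm — is asserted rather than proved, and it is in fact false for the costs of Eq.~\eqref{eq:cost}: pick distinct unit vectors $\mathbf{u}_x,\mathbf{w}_y$ (one per node) lying in a small spherical cap, so that all pairwise inner products are strictly positive, and set $\mathbf{E}_1(x)=t\,\mathbf{u}_x$, $\mathbf{E}_2(y)=t\,\mathbf{w}_y$. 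As $t\to\infty$, every entry of $\mathbf{M}$, $\mathbf{C}_1$, and $\mathbf{C}_2$ tends to $0$, hence the full FGW objective tends to $0$ along a sequence whose normalized embeddings never merge. So minimizing sequences need not collapse in your framework, and your final identification of ``common direction with diverging norm'' with the proposition's exact conclusion $\mathbf{E}_1(x)=\mathbf{E}_2(y)$ also cannot hold (vectors with diverging norms neither become equal nor converge). Your fallback genericity assumption $\mathbf{A}_1\neq\mathbf{A}_2$ does not repair this.

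The paper's proof takes a different and tighter route for exactly this step. It works under a fixed transport plan $\mathbf{S}$ that is \emph{strictly positive} (the entropic/proximal solver guarantees $\mathbf{S}(x,y)>0$ for all pairs), so the Wasserstein term vanishes if and only if $\mathbf{M}(x,y)=0$ for \emph{every} pair, which is read as cross-network embedding equality (achievability is argued via universal approximation); for the GW term it uses a diagonal trick you are missing: since $\mathbf{C}_1(x,x)=\mathbf{C}_2(y,y)=0$ and every quadruple — including those with $x=x'$ — carries positive mass, a vanishing GW term forces each entry of $\mathbf{C}_1$ and $\mathbf{C}_2$ to be zero \emph{individually}, with no genericity assumption on $\mathbf{A}_1,\mathbf{A}_2$. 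To your credit, your observation that with the exponential costs of Eq.~\eqref{eq:cost} the infimum $0$ is unattained at finite parameters is sharp, and it exposes a looseness in the paper's own argument (exact $\mathbf{M}\equiv 0$ is impossible for $e^{-\langle\cdot,\cdot\rangle}$; the paper's step from zero cost to equal embeddings really presumes a distance-type cost). But your asymptotic reformulation inherits the harder burden of characterizing \emph{all} minimizing sequences, and the counterexample above shows that characterization fails; closing the gap requires either the paper's per-entry vanishing argument under a fixed strictly positive plan with distance-like costs, or some additional mechanism ruling out non-collapsed divergent sequences.
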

\begin{proof}
Firstly, the Wasserstein term can be written as
\begin{equation}
    \sum_{x\in\mathcal{G}_1,y\in\mathcal{G}_2}\mathbf{M}^q(x, y)\mathbf{S}(x,y)
\label{eq:prop_w}
\end{equation}
Due to the non-negativity of $\mathbf{M}$ and $\mathbf{S}$, i.e., $\mathbf{S}(x,y)\geq 0, \mathbf{M}(x, y)\geq 0, \forall x\in\mathcal{G}_1, y\in\mathcal{G}_2$, the Wasserstein term in Eq.~\eqref{eq:prop_w} has a theoretical minimum of 0.
Since Eq.~\eqref{eq:prop_w} is a linear programming problem w.r.t $\mathbf{S}$ which is computationally demanding to solve, existing works turn to solve the entropy-regularized OT problem to approximate Eq.~\eqref{eq:prop_w}, where the solved $\mathbf{S}$ is strictly positive, i.e. $\mathbf{S}(x,y)>0,\forall x\in\G_1,y\in\G_2$.
We can simply prove by contradiction that Eq.~\eqref{eq:prop_w} reaches 0 if and only if $ \forall x\in\mathcal{G}_1, y\in\mathcal{G}_2, \mathbf{M}(x, y)=0$. According to the universal approximation theorem~\cite{cybenko1989approximation}, such cross-network cost matrix is achievable with a MLP. Therefore, optimizing Eq.~\eqref{eq:prop_w} under a node mapping matrix $\mathbf{S}$ will lead to collapsed node embeddings across two networks, i.e., $\mathbf{E}_1(x)=\mathbf{E}_2(y), \forall x\in\G_1,y\in\G_2$.

Secondly, the GW term can be formulated as
\begin{equation}
    \!\!\!\!\!\sum_{x_1,x_2\in\G_1\atop y_1,y_2\in\G_2}\!\!\!\!|\mathbf{C}_1(x_1,x_2)-\mathbf{C}_2(y_1,y_2)|^q\mathbf{S}(x_1,y_1)\mathbf{S}(x_2,y_2).
\label{eq:prop_gw}
\end{equation}
Similarly, due to the non-negativity of $|\mathbf{C}_1(x_1,x_2)-\mathbf{C}_2(y_1,y_2)|^q$ and the positivity of $\mathbf{S}(x_1,y_1)\mathbf{S}(x_2,y_2)$, the GW term in Eq~\eqref{eq:prop_gw} has a theoretical minimum of 0 if and only if $ \forall x_1, x_2\in\mathcal{G}_1, y_1, y_2\in\mathcal{G}_2, |\mathbf{C}_1(x_1,x_2)-\mathbf{C}_2(y_1,y_2)|^q=0$.
Since $\mathbf{C}_1(x, x)=\mathbf{C}_2(y, y)=0$, $\forall x_1, x_2\in\mathcal{G}_1, y_1, y_2\in\mathcal{G}_2, \mathbf{C}_1(x_1, x_2)=\mathbf{C}_2(y_1, y_2)=0$, which essentially means the embeddings of all nodes in $\G_1$ ($\G_2$) collapse into a single point, i.e., $\mathbf{E}_1(x_1)=\mathbf{E}_1(x_2), \mathbf{E}_2(y_1)=\mathbf{E}_2(y_2), \forall x_1,x_2\in\G_1,y_1,y_2\in\G_2$. By combining Eq.~\eqref{eq:prop_w} and Eq.~\eqref{eq:prop_gw}, the Wasserstein term further causes the embedding of all nodes in both networks to collapse into a single point, i.e., $\mathbf{E}_1(x)=\mathbf{E}_2(y), \forall x\in\G_1,y\in\G_2$. Therefore, directly optimizing feature encoder with the FGW distance leads embedding collapse.
\end{proof}

\vspace{-10pt}
\subsection{Proof of Theorem~\ref{theo:convergence}}
\begin{theorem*}
    {\normalfont \textsc{(Convergence of \algname)}} The unified objective for \algname\ in Eq.~\eqref{eq:object} is non-increasing and converges along the alternating optimization.
\end{theorem*}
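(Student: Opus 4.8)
The plan is to prove convergence by the classical route for block-coordinate (alternating) minimization: show that the objective value $\mathcal{J}(\mathbf{S},\lambda,\theta)$ of Eq.~\eqref{eq:object} is non-increasing across \emph{every} one of the three partial updates, and that it is bounded below; then the scalar sequence $\{\mathcal{J}(\mathbf{S}^{(k)},\lambda^{(k)},\theta^{(k)})\}_k$ is monotone and bounded, hence convergent by the monotone convergence theorem. Concretely, I would establish the chain $\mathcal{J}(\mathbf{S}^{(k+1)},\lambda^{(k)},\theta^{(k)}) \le \mathcal{J}(\mathbf{S}^{(k)},\lambda^{(k)},\theta^{(k)})$, then $\mathcal{J}(\mathbf{S}^{(k+1)},\lambda^{(k+1)},\theta^{(k)}) \le \mathcal{J}(\mathbf{S}^{(k+1)},\lambda^{(k)},\theta^{(k)})$, then $\mathcal{J}(\mathbf{S}^{(k+1)},\lambda^{(k+1)},\theta^{(k+1)}) \le \mathcal{J}(\mathbf{S}^{(k+1)},\lambda^{(k+1)},\theta^{(k)})$, whose composition yields $\mathcal{J}^{(k+1)} \le \mathcal{J}^{(k)}$.

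For the $\mathbf{S}$-update I would invoke the descent property of the proximal point method of Eq.~\eqref{eq:ot_cpot}. Since $\mathbf{S}^{(t+1)}$ minimizes $\mathcal{J}(\mathbf{S};\lambda^{(k)},\theta^{(k)}) + \gamma_p\,\mathrm{Div}(\mathbf{S}\,\|\,\mathbf{S}^{(t)})$, evaluating this at the feasible point $\mathbf{S}=\mathbf{S}^{(t)}$ and using $\mathrm{Div}(\mathbf{S}^{(t)}\|\mathbf{S}^{(t)})=0$ together with the non-negativity of the Bregman divergence gives $\mathcal{J}(\mathbf{S}^{(t+1)}) \le \mathcal{J}(\mathbf{S}^{(t)})$; telescoping over the proximal iterations $t$ shows the entire $\mathbf{S}$-update does not increase $\mathcal{J}$ (each inner subproblem being the entropy-regularized OT of Eq.~\eqref{eq:ot_cpot_final}, solved to optimality by Sinkhorn). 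For the $\lambda$-update I would note that, with $\mathbf{S}$ and $\theta$ frozen, $\mathcal{J}$ is a quadratic in $\lambda$ whose leading coefficient is $\alpha\mathcal{K}_3 = \alpha\sum d_e \ge 0$ (the Wasserstein term contributes only linearly, through $\mathcal{K}_1$); hence it is convex, and the closed-form $\lambda^{(k+1)}$ of Eq.~\eqref{eq:opt_lambda} is its global minimizer, so the $\lambda$-step cannot increase $\mathcal{J}$. Finally, since Eq.~\eqref{eq:opt_sgd} is written as an exact minimization over $\theta$ (in practice an SGD step with a sufficiently small learning rate yields a descent direction), the $\theta$-update is non-increasing as well.

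The part I expect to be the crux is \emph{boundedness below}, because the transformation $\sn = g_\lambda(\mathbf{S}) = \mathbf{S}-\lambda\mathbf{1}_{n_1\times n_2}$ destroys the manifest non-negativity that the plain FGW distance enjoys: once $\sn$ can be negative, the linear Wasserstein term and the quadratic GW form in Eq.~\eqref{eq:object} are both sign-indefinite, so unlike Proposition~\ref{prop:collapse} I cannot simply appeal to $\mathcal{J}\ge 0$. I would instead argue finiteness of the infimum as follows. Writing $\mathcal{J}$ as the genuine ($\lambda$-free) FGW objective evaluated at $\mathbf{S}$ — which is non-negative — plus a correction that is linear-plus-quadratic in $\lambda$, the minimum over $\lambda$ of a strictly convex quadratic (leading coefficient $\alpha\mathcal{K}_3>0$) is finite and equals a correction of the form $-[(1-\alpha)\mathcal{K}_1+\alpha\mathcal{K}_2]^2/(4\alpha\mathcal{K}_3)$. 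The remaining task is to bound this correction uniformly in $k$: the feasible set $\Pi(\bm{\mu}_1,\bm{\mu}_2)$ is compact, and the cost entries $\mathbf{M}=e^{-\mathbf{E}_1\mathbf{E}_2^{\T}}$ and $\mathbf{C}_i=e^{-\mathbf{E}_i\mathbf{E}_i^{\T}}\odot\mathbf{A}_i$ are continuous in $\theta$, so it suffices to confine $\theta$ to a bounded set. Since the $\theta$-iterates never leave the initial sublevel set $\{\theta : \mathcal{J}(\mathbf{S}^{(k+1)},\lambda^{(k+1)},\theta)\le \mathcal{J}^{(k)}\}$, a mild coercivity assumption on $f_\theta$ keeps $\mathbf{M},\mathbf{C}_1,\mathbf{C}_2$ bounded, which in turn bounds $\mathcal{K}_1,\mathcal{K}_2,\mathcal{K}_3$ away from the degenerate regime and yields a finite $L$ with $\mathcal{J}^{(k)}\ge L$ for all $k$. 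Combining monotonicity with this lower bound, $\{\mathcal{J}^{(k)}\}$ converges, which is the claim.
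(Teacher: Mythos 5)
Your proposal is correct, and its skeleton --- the three-link monotonicity chain $\mathcal{J}(\mathbf{S}^{(k+1)},\lambda^{(k)},\theta^{(k)})\le\mathcal{J}(\mathbf{S}^{(k)},\lambda^{(k)},\theta^{(k)})$, then the $\lambda$-step, then the $\theta$-step, composed with a lower bound and the monotone convergence theorem --- is exactly the structure of the paper's proof. The differences are in how the two lemmas are discharged. For the $\mathbf{S}$-update you give a direct, self-contained prox argument (evaluate the subproblem of Eq.~\eqref{eq:ot_cpot} at $\mathbf{S}^{(t)}$, use $\mathrm{Div}(\mathbf{S}^{(t)}\|\mathbf{S}^{(t)})=0$ and non-negativity of the divergence, then telescope), whereas the paper does not argue this itself but cites the successive upper-bound minimization framework~\cite{gwl,razaviyayn2013unified}. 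Your route is more elementary, but be aware it is airtight only if Eq.~\eqref{eq:ot_cpot} is solved \emph{exactly}: the subproblem actually solved, Eq.~\eqref{eq:ot_cpot_final}, freezes the GW quadratic at $\mathbf{S}^{(t)}$ through $\mathbf{L}_{\text{gw}}^{(t)}$, so it is a linearized surrogate rather than an equivalent reformulation, and descent of $\mathcal{J}$ under that surrogate is precisely what the majorization property imported from SUM theory supplies --- your parenthetical ``solved to optimality by Sinkhorn'' papers over this. On boundedness below, you correctly identified the crux (the shift $\sn=\mathbf{S}-\lambda\mathbf{1}$ destroys the non-negativity that Proposition~\ref{prop:collapse} could rely on); the paper handles it more bluntly than you do, simply \emph{assuming} $\theta$ bounded (citing~\cite{slotalign}), noting $\mathbf{S}\in\Pi(\bm{\mu}_1,\bm{\mu}_2)$ is compact, and observing that $\mathcal{J}$ is a convex quadratic in $\lambda$ minimized at the closed form of Eq.~\eqref{eq:opt_lambda}, whence $\mathcal{J}(\mathbf{S},\lambda,\theta)\ge\mathcal{J}(\mathbf{S},\lambda^*,\theta)>\epsilon$. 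Your more elaborate version --- the explicit correction $-[(1-\alpha)\mathcal{K}_1+\alpha\mathcal{K}_2]^2/(4\alpha\mathcal{K}_3)$ plus a sublevel-set/coercivity argument --- buys an actual justification of bounded iterates in place of the paper's bare assumption, at the cost of an extra hypothesis on $f_\theta$; note that both you and the paper implicitly need the strict inequality $\mathcal{K}_3>0$ (the paper only states a non-negative quadratic coefficient) for the $\lambda$-quadratic to be bounded below at all.
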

\begin{proof}
    We first prove Eq.~\eqref{eq:object} is bounded by a minimum value. We make a common assumption that the parameter set $\theta$ of the MLP is bounded~\cite{slotalign}. Since $\mathbf{S}\in\Pi(\bm{\mu}_1,\bm{\mu}_2)$ is bounded as well, we only need to prove that Eq.~\eqref{eq:object} is bounded w.r.t $\lambda$, which is essentially a quadratic function with a non-negative coefficient for the quadratic term, i.e.,
    \begin{equation*}
        \sum_{\substack{x_1,x_2\in\G_1\\y_1,y_2\in\G_2}}|\mathbf{C}_1(x_1,x_2)-\mathbf{C}_2(y_1,y_2)|^2 \geq 0
    \end{equation*}
    By solving $\lambda$ based on $\partial \mathcal{J}/\partial \lambda=0$ according to Eq.~\eqref{eq:opt_lambda}, we have the optimal $\lambda^*$ minimizing Eq.~\eqref{eq:object} as follows
    \begin{equation*}
        \min\limits_{\lambda}\mathcal{J}(\mathbf{S},\lambda,\theta)=\mathcal{J}(\mathbf{S},\lambda^*,\theta).
    \end{equation*}
    Since both $\theta$ and $\mathbf{S}$ are bounded, there exists a real number $\epsilon\in\mathbb{R}$ satisfying
    \begin{equation*}
        \mathcal{J}(\mathbf{S},\lambda,\theta)\geq \mathcal{J}(\mathbf{S},\lambda^*,\theta)>\epsilon
    \end{equation*}
    In this way, we have prove that Eq.~\eqref{eq:object} is bounded by a minimum value $\epsilon$.
    
    Then, we prove that Eq.~\eqref{eq:object} is non-increasing and converges along the alternating optimization, i.e.,
    \begin{equation}\label{eq:decrease}
        \mathcal{J}(\mathbf{S}^{(k+1)}, \lambda^{(k+1)}, \theta^{(k+1)})\leq \mathcal{J}(\mathbf{S}^{(k)}, \lambda^{(k)}, \theta^{(k)})
    \end{equation}
    To prove Eq.~\eqref{eq:decrease}, we first show that the OT optimization by proximal point method is non-increasing. Specifically, as proved theoretically in~\cite{gwl}, the proximal point method solves Eq.~\eqref{eq:object} w.r.t $\mathbf{S}$ by decomposing the non-convex objective function into a series of convex approximations, which be viewed as a successive upper-bound minimization~\cite{razaviyayn2013unified} problem whose descend property is guaranteed. In this way, we have demonstrated that
    \begin{equation}\label{eq:decrease_ot}
        \mathcal{J}(\mathbf{S}^{(k+1)}, \lambda^{(k)}, \theta^{(k)})\leq \mathcal{J}(\mathbf{S}^{(k)}, \lambda^{(k)}, \theta^{(k)})
    \end{equation}
    Then, we solve $\lambda^{(k+1)}$ optimally based on the closed-form solution in Eq.~\eqref{eq:opt_lambda} with guaranteed global minimum. Therefore, we have
    \begin{equation}\label{eq:decrease_lamda}
        \mathcal{J}(\mathbf{S}^{(k+1)}, \lambda^{(k+1)}, \theta^{(k)})\leq \mathcal{J}(\mathbf{S}^{(k+1)}, \lambda^{(k)}, \theta^{(k)})
    \end{equation}
    Finally, with an appropriate learning rate, the objective of the embedding learning process via SGD is non-increasing at each step, i.e.,
    \begin{equation}\label{eq:decrease_emb}
        \mathcal{J}(\mathbf{S}^{(k+1)}, \lambda^{(k+1)}, \theta^{(k+1)})\leq \mathcal{J}(\mathbf{S}^{(k+1)}, \lambda^{(k+1)}, \theta^{(k)})
    \end{equation}
    Combining Eq.~\eqref{eq:decrease_ot}-\eqref{eq:decrease_emb} gives Eq.~\eqref{eq:decrease}, hence proving Theorem~\ref{theo:convergence}.
\end{proof}

\vspace{-10pt}
\subsection{Proof of Proposition~\ref{prop:complexity}}
\begin{proposition*}
{\normalfont (\textsc{Complexity of \algname})} The overall time complexity of \algname\ is $\mathcal{O}\left(KTmn+KTNn^2\right)$ at the training phase and $\mathcal{O}\left(Tmn+TNn^2\right)$ at the inference phase, where $K, T, N$ denote the number of iterations for alternating optimization, proximal point iteration, and Sinkhorn algorithm, respectively.
\end{proposition*}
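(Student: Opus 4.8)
The plan is to bound the cost of a single pass through the training loop (lines~5--9 of Algorithm~\ref{algo:jeona}) and multiply by the number $K$ of alternating iterations; the inference bound then follows by dropping the outer factor $K$, since inference performs only one MLP forward pass followed by a single OT solve. I would first dispense with the cheap components. The RWR encoding (line~2) is a one-time preprocessing step on sparse graphs that is dominated by the main loop. Treating the embedding dimension and anchor-set size as constants, the MLP forward pass in line~5 costs $\mathcal{O}(n)$, and its SGD update in line~9 (Eq.~\eqref{eq:opt_sgd}) costs the same up to the cost-matrix gradients treated below. For the cost matrices in Eq.~\eqref{eq:cost}, the dense cross-network matrix $\mathbf{M}=e^{-\mathbf{E}_1\mathbf{E}_2^\T}$ requires $\mathcal{O}(n^2)$, whereas each $\mathbf{C}_i=e^{-\mathbf{E}_i\mathbf{E}_i^\T}\odot\mathbf{A}_i$ is \emph{supported only on edges}, so it carries $\mathcal{O}(m)$ nonzero entries and is built in $\mathcal{O}(m)$ time.

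The heart of the argument is the OT optimization in line~7. Each proximal-point iteration (Eq.~\eqref{eq:ot_cpot_final}) must form $\mathbf{L}_{\text{gw}}^{(t)}=\mathbf{C}_1^2\sn^{(t)}\mathbf{1}_{n_2\times n_2}+\mathbf{1}_{n_1\times n_1}\sn^{(t)}\mathbf{C}_2^{2^\T}-2\mathbf{C}_1\sn^{(t)}\mathbf{C}_2^\T$ and then solve an entropy-regularized OT problem by Sinkhorn. Read naively, the GW/edge-level term sums over all quadruples $(x,x',y,y')$, which is $\mathcal{O}(n^4)$; the key step is to show this collapses to $\mathcal{O}(mn)$. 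I would argue that the first two terms reduce to row/column sums of $\sn^{(t)}$ (each $\mathcal{O}(n^2)$) followed by sparse matrix--vector products against the entrywise-squared cost matrices ($\mathcal{O}(m)$), while the third term is evaluated as a sparse-times-dense product $\mathbf{C}_1\sn^{(t)}$ in $\mathcal{O}(mn)$ followed by a dense-times-sparse product with $\mathbf{C}_2^\T$ in $\mathcal{O}(mn)$, precisely because $\mathbf{C}_1,\mathbf{C}_2$ each carry only $\mathcal{O}(m)$ nonzeros. Hence forming $\mathbf{L}_{\text{gw}}^{(t)}$ costs $\mathcal{O}(mn)$, and each of the $N$ Sinkhorn sweeps is a matrix--vector multiply against the $n_1\times n_2$ kernel costing $\mathcal{O}(n^2)$, for $\mathcal{O}(Nn^2)$ per proximal step. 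Summing over $T$ proximal iterations gives $\mathcal{O}(Tmn+TNn^2)$ for line~7.

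The remaining step is the closed-form $\lambda$ update (line~8, Eq.~\eqref{eq:opt_lambda}), whose numerator and denominator again seem to range over all quadruples. I would show the same sparsity/factorization trick applies: expanding $d_e=|\mathbf{C}_1-\mathbf{C}_2|^2$ into its three bilinear pieces and separating the sums over $\mathcal{G}_1$ and $\mathcal{G}_2$ lets $\mathcal{K}_3$ be evaluated in $\mathcal{O}(m)$ and $\mathcal{K}_2$, which additionally weights by $\mathbf{S}^{(k+1)}$, in $\mathcal{O}(mn+n^2)$; both are dominated by line~7. Aggregating the components, one training iteration costs $\mathcal{O}(Tmn+TNn^2)$, and multiplying by $K$ yields the claimed $\mathcal{O}(KTmn+KTNn^2)$; the inference bound $\mathcal{O}(Tmn+TNn^2)$ is the single-pass specialization.

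I expect the main obstacle to be rigorously justifying the collapse of the GW/edge-level sums from $\mathcal{O}(n^4)$ to $\mathcal{O}(mn)$---in particular, verifying that the factorized identity of Eq.~\eqref{eq:ot_cpot_final} is applied with the \emph{entrywise-squared} cost matrices and that intermediate products are associated in the order (sparse)$\times$(dense)$\times$(sparse), so that no dense $n\times n$ by dense $n\times n$ multiply (which would be $\mathcal{O}(n^3)$ or worse) ever arises. Getting the association order and the sparsity bookkeeping right for both $\mathbf{L}_{\text{gw}}^{(t)}$ and the $\lambda$-update sums is where the care lies; everything else is routine accounting.
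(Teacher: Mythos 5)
Your proposal is correct and follows the same skeleton as the paper's proof: a component-wise accounting (RWR encoding, MLP forward/backward, the closed-form $\lambda$ update, and the OT solve) per alternating iteration, multiplied by $K$, with the inference bound obtained as the one-pass specialization. The one genuine difference is in where the hard work lives: the paper simply cites PARROT for the two crucial bounds---$\mathcal{O}(Tmn+TNn^2)$ for the proximal-point/Sinkhorn loop and $\mathcal{O}(mn)$ for the $\lambda$ update---whereas you derive them from first principles, showing that the apparent $\mathcal{O}(n^4)$ quadruple sums collapse via the factorized identity for $\mathbf{L}_{\text{gw}}^{(t)}$ together with the $\mathcal{O}(m)$-sparsity of $\mathbf{C}_1,\mathbf{C}_2$ and the (sparse)$\times$(dense)$\times$(sparse) association order, and that $\mathcal{K}_2,\mathcal{K}_3$ in Eq.~\eqref{eq:opt_lambda} factor into row/column sums computable in time dominated by the OT step (indeed $\mathcal{K}_2$ is even $\mathcal{O}(m+n^2)$, slightly better than your stated $\mathcal{O}(mn+n^2)$). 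Your version is thus more self-contained and makes explicit the sparsity bookkeeping that the paper leaves implicit in its citations; the paper's version buys brevity at the cost of deferring exactly the steps you identified as the heart of the argument. Minor bookkeeping differences---you absorb $|\mathcal{L}|$ and the embedding dimensions as constants up front, while the paper carries $\mathcal{O}(n|\mathcal{L}|d_1+n|\mathcal{L}|d_2)$ explicitly and then invokes $n\gg|\mathcal{L}|$, and the paper counts RWR as $\mathcal{O}(mn)$ rather than just "dominated"---do not affect the result.
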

\begin{proof}
    The time complexity of \algname\ includes four parts: RWR encoding, MLP computation, calculation of the optimal $\lambda$, and OT optimization. As $\mathbf{C}_i,\mathbf{W}_i$ are sparse matrices with $\mathcal{O}(m)$ non-zero entries, the time complexity of RWR in Eq.~\eqref{eq:rwr} is $\mathcal{O}(mn)$~\cite{parrot}.
    
    For each iteration of the alternating optimization, the time complexity for forward (backward) propagation of the MLP model $\G_1(\G_2)$ are $\mathcal{O}(n|\mathcal{L}|d_1)$ (first layer) and $\mathcal{O}(n|\mathcal{L}|d_2)$ (second layer), respectively. For the calculation of the optimal $\lambda$, the time complexity is $\mathcal{O}(mn)$~\cite{parrot}. For the OT optimization, the time complexity is $\mathcal{O}(Tmn+TNn^2)$ with $T$ iterations of proximal point method and $N$ Sinkhorn iterations~\cite{parrot}.
    
    Combining the above three components gives a total time complexity of $\mathcal{O}\left(K(2n|\mathcal{L}|d_1+2n|\mathcal{L}|d_2+(T+1)mn+TNn^2)\right)$ where $K$ is the number of iteration for the alternating optimization. Since $n\gg |\mathcal{L}|,T\gg1$, the overall training time complexity of \algname\ is $\mathcal{O}(KTmn+KTNn^2)$. Note that model inference is only one-pass without the alternating optimization, hence the inference time complexity is $\mathcal{O}\left(Tmn+TNn^2\right)$.
\end{proof}

\vspace{-5pt}
\section{Experiment Pipeline}\label{app:exp}
\vspace{-5pt}
\begin{table}[ht]
    \small
    \vspace{-5pt}
    \centering
    \caption{Dataset Statistics.}
    \vspace{-5pt}
    \begin{tabular}{ccccc}
        \toprule
        \textbf{Scenarios} & \textbf{Networks} & \textbf{\# nodes} & \textbf{\# edges} & \textbf{\# attributes} \\
        \midrule
        \multirow{6}{*}{Plain} & Foursquare & 5,313 & 54,233 & 0 \\
        & Twitter & 5,120 & 130,575 & 0 \\
        \cmidrule{2-5}
        & ACM & 9,872 & 39,561 & 0 \\
        & DBLP & 9,916 & 44,808 & 0 \\
        \cmidrule{2-5}
        & Phone & 1,000 & 41,191 & 0 \\
        & Email & 1,003 & 4,627 & 0 \\
        \midrule
        \multirow{6}{*}{Attributed} & Cora1 & 2,708 & 6,334 & 1,433 \\
        & Cora2 & 2,708 & 4,542 & 1,433 \\
        \cmidrule{2-5}
        & ACM(A) & 9,872 & 39,561 & 17 \\
        & DBLP(A) & 9,916 & 44,808 & 17 \\
        \cmidrule{2-5}
        & Douban(online) & 3,906 & 16,328 & 538 \\
        & Douban(offline) & 1,118 & 3,022 & 538 \\
        \bottomrule
    \end{tabular}
    \vspace{-5pt}
    \label{tab:datasets}
\end{table}

\noindent\textbf{Dataset Descriptions.} 
We provide dataset descriptions as follows
\begin{itemize}
    \item Foursquare-Twitter~\cite{zhang2015integrated}: A pair of online social networks with nodes as users and edges as follower/followee relationships.
    Node attributes are unavailable in both networks. There are 1,609 common users used as ground-truth.
    \item ACM-DBLP~\cite{tang2008arnetminer}: A pair of undirected co-authorship networks with nodes as authors and edges as co-authorship. 
    Node attributes are available in both networks, and we use the dataset for both plain and attributed network alignment tasks named ACM-DBLP and ACM(A)-DBLP(A), respectively. There are 6,325 common authors as ground-truth.
    \item Phone-Email~\cite{zhang2017ineat}: A pair of communication networks with nodes as people and edges as their communications via phone or email.
    Node attributes are unavailable in both networks. There are 1,000 common people used as ground-truth.
    \item Cora1-Cora2~\cite{yang2016revisiting}. A citation network with nodes representing publications and edges as citations among publications. Cora-1 and Cora-2 are two noisy permutation networks generated by inserting 10\% edges into Cora-1 and deleting 15\% edges from Cora-2.
    Both networks contains binary bag-of-words vectors as attributes. There are 2,708 common publications used as ground-truth.
    \item Douban~\cite{final}. A pair of social networks with nodes representing users and edges representing user interactions on the website. 
    The node attributes are binary vectors that encodes the location of a user. There are 1,118 common user across the two networks used as ground-truth.
\end{itemize}
Dataset statistics are given in Table~\ref{tab:datasets}. We use 20\% ground-truth as the anchor nodes and the rest 80\% of the ground-truth for testing.

\noindent\textbf{Machine and Code.} The proposed model is implemented in PyTorch. We use Apple M1 Pro with 16GB RAM to run PARROT, IsoRank, FINAL, and GOAT. We use NVIDIA Tesla V100 SXM2 as GPU for \algname\ and other baselines.

\noindent\textbf{Implementation Details.} Adam optimizer is used with a learning rate of 1e-4 to train the model. The hidden and output dimension is set to 128. The epoch number of \algname\ is 50. An overview of other hyperparameters settings for \algname\ is shown in Table~\ref{tab:hparam}. For all baselines, hyperparameters are set as default in their official code.

\vspace{-10pt}
\begin{table}[H]
\small
\centering
\caption{Hyperparameters settings}
\label{tab:hparam}
\vspace{-5pt}
\begin{tabular}{cccc}
\toprule
Dataset  & $\alpha$ & $\beta$ & $\gamma_p$ \\
\midrule
Foursquare-Twitter & 0.50 & 0.15 & 1e-3 \\
ACM-DBLP           & 0.90 & 0.15 & 5e-3 \\
Phone-Email        & 0.75 & 0.15 & 1e-2 \\
ACM(A)-DBLP(A)     & 0.90 & 0.15 & 1e-2 \\
Cora1-Cora2        & 0.30 & 0.15 & 5e-4 \\
Douban             & 0.50 & 0.15 & 1e-3 \\
\bottomrule
\end{tabular}
\vspace{-7pt}
\end{table}

\vspace{-10pt}
\begin{figure}[t]
  \includegraphics[width=0.7\linewidth, trim=0 10 0 20, clip]{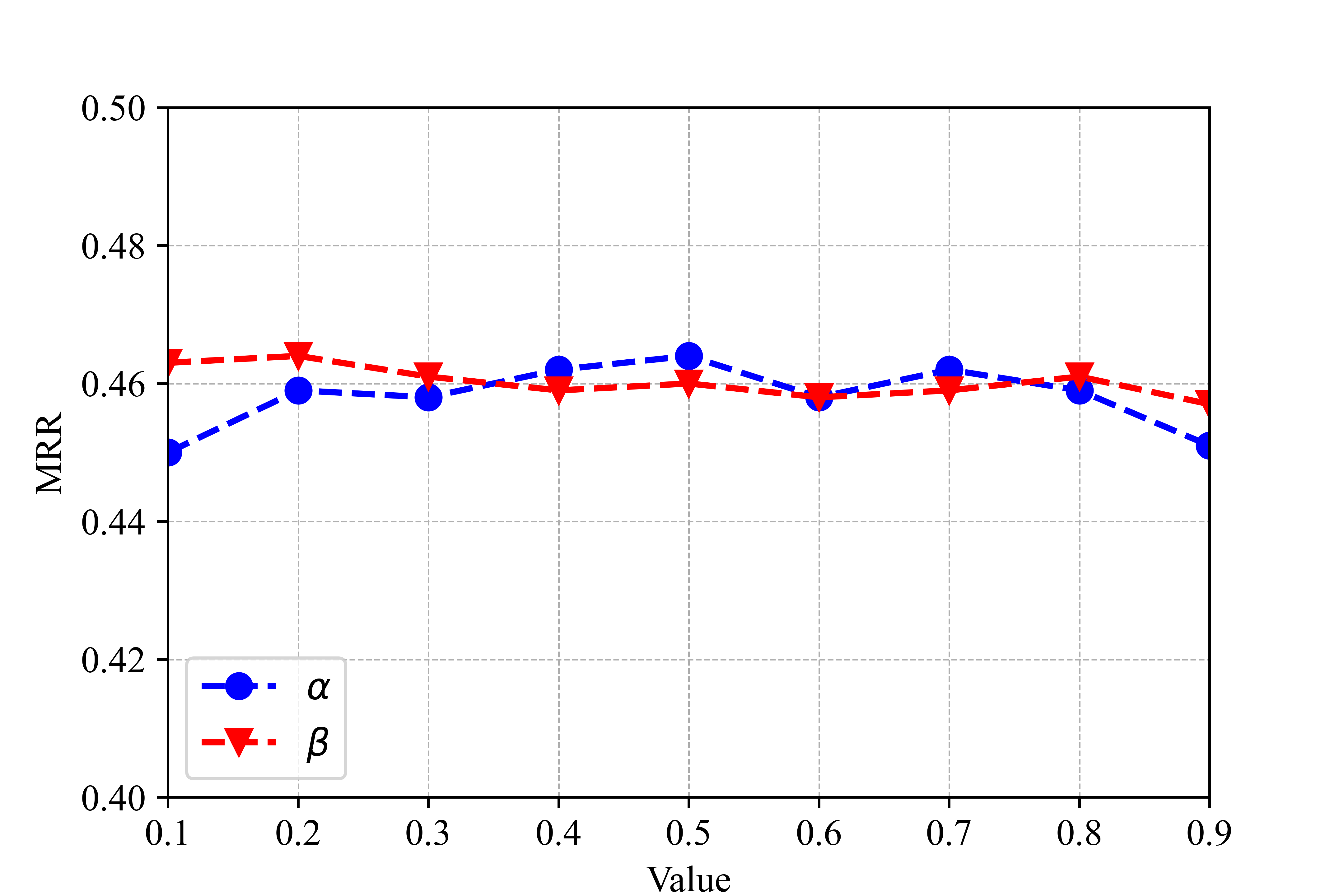}
  \vspace{-5pt}
  \caption{Hyperparameter study on Foursquare-Twitter.}
  \label{fig:exp_hparams}
  \vspace{-10pt}
\end{figure}

\vspace{-5pt}
\section{Additional Experiements}
\vspace{-2pt}
\subsection{Additional Scalability Results}

To further demonstrate the scalability of the proposed \algname\ on large-scale networks, we compared the inference time of \algname\ with that of three OT-based methods, including GOAT~\cite{goat}, PARROT~\cite{parrot}, and SLOTAlign~\cite{slotalign} on ogbl-biokg~\cite{hu2020open} with 93,773 nodes and 5,088,434 edges. We follow the synthesis process of Cora~\cite{yang2016revisiting} to generate two noisy permutation networks from ogbl-biokg by randomly inserting 10\% edges into the first network and deleting 15\% edges from the second network. The results are shown in Table~\ref{tab:exp_add_runtime}, which shows that \algname\ outperforms all baselines in inference time and MRR under 12-hour runtime limit.

\vspace{-5pt}
\begin{table}[ht]
    \centering
    \caption{Scalability results on ogbl-biokg}
    \vspace{-5pt}
    \resizebox{0.8\linewidth}{!}{
    \begin{tabular}{ccccc}
        \toprule
        & SLOTAlign & GOAT & PARROT & JOENA \\
        \midrule
        Runtime & >12h & >12h & 2.71h & \textbf{0.66h} \\
        MRR & 0.341 & 0.112 & 0.741 & \textbf{0.876} \\
        \bottomrule
    \end{tabular}
    }
    \label{tab:exp_add_runtime}
\end{table}

\vspace{-10pt}
\subsection{Hyperparameter Sensitivity Study}
We study the sensitivity of \algname\ on the FGW weight $\alpha$ and RWR restart probability $\beta$, with values ranging from 0.1 to 0.9. The results are shown in Figure~\ref{fig:exp_hparams}, which shows that our method is robust to different selections of hyperparameters in a wide range.

\end{document}